\definecolor{orange}{RGB}{250,100,0}
\definecolor{customdarkgreen}{RGB}{0,150,0}
\definecolor{customdarkblue}{RGB}{0,0,150}
\definecolor{customdarkred}{RGB}{150,0,0}
\definecolor{customdarkgreen}{RGB}{0,150,0}
\definecolor{customdarkblue}{RGB}{0,0,150}
\def\supp{{\rm supp}}
\def\neigh{\leftrightarrow}
\newtheorem{theorem}{Theorem}[section]
\newtheorem{proposition}[theorem]{Proposition}
\newtheorem{lemma}[theorem]{Lemma}
\theoremstyle{definition}
\theoremstyle{remark}
\newtheorem{remark}[theorem]{Remark}
\numberwithin{equation}{section}
\numberwithin{figure}{section}
\def\beq{\begin{equation}} % \setcounter{equation}{1}}
\def\eeq{\end{equation}}
\def\beqn{\begin{eqnarray*}}
\def\eeqn{\end{eqnarray*}}
\def\Bitem{\begin{itemize}\setlength{\itemsep}{.2in}}
\def\bitem{\begin{itemize}\setlength{\itemsep}{.05in}}
\def\eitem{\end{itemize}}
\def\Benum{\begin{enumerate}\setlength{\itemsep}{.2in}}
\def\benum{\begin{enumerate}\setlength{\itemsep}{.05in}}
\def\eenum{\end{enumerate}}
\def\bmult{\begin{multline*}}
\def\emult{\end{multline*}}
\def\bcenter{\begin{center}}
\def\ecenter{\end{center}}
\def\bframe{\begin{frame}}
\def\eframe{\end{frame}}
\newcommand{\thmref}[1]{Theorem~\ref{thm:#1}}
\newcommand{\prpref}[1]{Proposition~\ref{prp:#1}}
\newcommand{\lemref}[1]{Lemma~\ref{lem:#1}}
\newcommand{\secref}[1]{Section~\ref{sec:#1}}
\newcommand{\figref}[1]{Figure~\ref{fig:#1}}
\DeclareMathOperator*{\argmax}{arg\, max}
\def\cC{\mathcal{C}}
\def\cE{\mathcal{E}}
\def\cG{\mathcal{G}}
\def\cS{\mathcal{S}}
\def\cU{\mathcal{U}}
\def\cY{\mathcal{Y}}
\def\bB{\mathbf{B}}
\def\bI{\mathbf{I}}
\def\bbR{\mathbb{R}}
\newcommand{\E}{\operatorname{\mathbb{E}}}
\renewcommand{\P}{\operatorname{\mathbb{P}}}
\def\Bin{\text{Bin}}
\def\eps{\epsilon}
\def\comp{\mathsf{c}}
\def\1{\mathbbm{1}}
\title{Graph Max Shift: A Hill-Climbing Method for Graph Clustering}
\author[1,2]{Ery Arias-Castro}
\author[1]{Elizabeth Coda} 
\author[3]{Wanli Qiao}
\affil[1]{\small Department of Mathematics, University of California, San Diego} 
\affil[2]{\small Halıcıoğlu Data Science Institute, University of California, San Diego}
\affil[3]{\small Department of Statistics, George Mason University}
\date{}
\begin{document}
\maketitle
\thispagestyle{empty}

\begin{abstract}
We present a method for graph clustering that is analogous to gradient ascent methods previously proposed for clustering points in space. The algorithm, which can be viewed as a max-degree hill-climbing procedure on the graph, iteratively moves each node to a neighboring node of highest degree. We show that, when applied to a random geometric graph whose nodes correspond to data drawn i.i.d. from a density with Morse regularity, the method is asymptotically consistent. Here, consistency is in the sense of Fukunaga and Hostetler, meaning, with respect to the partition of the support of the density defined by the basins of attraction of the density gradient flow.
\end{abstract}

\section{Introduction}
\label{sec:intro}

A hill-climbing algorithm is typically understood as an algorithm that makes `local' moves. In a sense, this class of procedures is the discrete analog of the class of gradient-based and higher-order methods in continuous optimization. Such algorithms have been proposed in the context of graph partitioning, sometimes as a refinement step, where the objective function is typically a notion of cut and local moves often take the form of swapping vertices in order to improve the value of the objective function. 
More specifically, consider an undirected graph consisting of $n$ nodes, which we take to be $[n] := \{1, \dots, n \}$ without loss of generality, and adjacency matrix $A = (a_{ij})$, so that $a_{ij} = 1$ when $i$ and $j$ are connected in the graph, and $a_{ij} = 0$ otherwise. Consider the simplest variant of graph partitioning, where assuming $n$ is even, the goal is to bisect the node set into two subsets, $I \subset [n]$ and $I^\comp$, in order to minimize the cut:
${\rm cut}(I) = \sum_{i \in I} \sum_{j \in I^\comp} a_{ij}.$ Arguably the simplest hill-climbing procedure would start from some partition, perhaps a random one, and then iteratively swap the pair of vertices, one in each subset of the current partition, that makes the cut decrease the most, and stop when no swap makes the cut decrease. \citet{carson2001} propose such an algorithm, which although relatively simple, is shown to be optimal for partitioning a planted bisection in the sense of achieving the information theoretic recovery threshold up to a poly-logarithmic factor.  
The procedure involves a careful initialization so that the hill-climbing part, meaning the swapping of pairs, may be seen as a refinement. This is also the case in other algorithms such as those of \cite{karypis1998fast, condon1999algorithms,  mossel2015consistency}. Sometimes the swapping is randomized \cite{jerrum1993simulated, juels1996topics, dimitriou1998go}. Hill-climbing algorithms for graph partitioning date back at least to the work of \citet{kernighan1970efficient}, who propose a more sophisticated swapping protocol than the one described above. The partition around medoids (PAM) algorithm of \citet{kaufmann1987clustering} includes a hill-climbing component.
%implementation/numerical
%\cite{fiduccia1988linear}
%\cite{johnson1989optimization}
%\cite{yeh1995optimization}

In the present paper, we attach a totally different meaning to `hill-climbing' that comes from the problem of clustering points in a Euclidean space as proposed by \citet{fukunaga1975}. (We note that this definition has been shown in \cite{arias2023b,arias2023} to be intimately related to the level set definition of clustering proposed by \citet{hartigan1975}.) 
Assuming the points were generated iid from some density which is regular enough for the associated gradient ascent flow to be well-defined, the space, in fact the density support, is partitioned according to the basins of attraction of that gradient flow. 
The hill-climbing characterization comes from being a gradient method: we move each point in the density support upward, in the landscape given by the density, following the steepest ascent, and points that end at the same location are grouped together. Of course, in practice the density is unknown and one needs to find workarounds. The most famous one is the mean-shift algorithm \cite{fukunaga1975, cheng1995}, although a number of other proposals have been suggested, including the most straightforward one which consists in first estimating the density, e.g., by kernel density estimation, and then computing the basin of attraction of the estimated density using a Euler scheme or similar --- see \cite{arias2025clustering}, where a number of these procedures are analyzed. 

The bridge between the discrete setting of a graph and the continuous setting of a Euclidean space is the usual one: geometric graphs, meaning, graphs whose node set is embedded in a Euclidean space. Our focus will be random geometric graphs with nodes corresponding to points that are generated iid from a density with sufficient regularity, and edges connecting all pairs of nodes whose associated points are within distance $\eps$. 
The basic question that motivates the present work is the following:
\begin{quote}
\textit{Given the adjacency matrix of a random geometric graph, is it possible to partition the graph in a way which is consistent with the clustering given by the gradient ascent flow of the density?}
\end{quote}
We emphasize that the geometric graph is only known through its adjacency matrix. No embedding of the graph is provided, in other words, the points associated with the nodes are unknown. (In this context, these points are often referred to as `latent positions'.)

% \textit{Given a random geometric graph with nodes corresponding to vertices that are generated iid from a density with sufficient regularity, is it possible to partition the graph in a way which is consistent with the clustering given by gradient ascent flow of the density?}

We answer this question affirmatively in the following sense: We propose an algorithm for graph partitioning which, when applied to a random geometric graph with underlying points generated iid from a density with enough regularity, outputs a partition which is consistent with the partition given by the basins of attraction of the gradient ascent flow of the density in an asymptotic setting, driven by $n\to\infty$, where the connectivity radius of the graph tends to zero slowly enough. Importantly, the algorithm we propose does not attempt to embed the graph. Although a strategy that would consist in embedding the graph and then applying a method for gradient ascent flow clustering is tempting, the problem of embedding the graph is arguably harder than the partitioning problem that we consider here.

For more connections to the literature, we can single out the algorithm of \citet{koontz1976}, which although proposed to cluster points in space, at its core is a hill-climbing graph partition algorithm close in intention to ours --- but as pointed out in \cite{arias2025clustering}, where it is called Max Slope Shift, the algorithm in its original form fails to achieve its purpose. Our algorithm can also be seen as a special case of a class of algorithms proposed by \citet{strazzeri2022}, although the intention there is very different. We elaborate on these works later on in \secref{connections}. 

The rest of the article is organized as follows. 
In \secref{prelim} we provide some background on gradient ascent flow clustering. 
In \secref{algorithm} we describe the algorithm we propose, which we call Graph Max Shift, make some connections to other algorithms, and establish its asymptotic consistency. 
In \secref{numerics} we present a series of numerical simulations demonstrating the use of the method. (Code is available to reproduce all experiments at \url{https://github.com/lizzycoda/GraphMaxShift}.) 
We conclude with a brief discussion in \secref{discussion}.

\section{Preliminaries}
\label{sec:prelim}

In the gradient flow approach to clustering, which originated in the work of \citet{fukunaga1975}, each $x \in \supp(f)$ is assigned to the limit point following along the direction of the gradient of $f$. Formally, assume that $\nabla f$ is Lipschitz so that the gradient ascent flow exists. Then the gradient flow line originating at a point $x$ is the curve $\gamma_x$ defined by
\begin{align}
\label{grad_flow_defn}
\gamma_x(0) = x; \qquad \dot{\gamma}_x(t) = \nabla f(\gamma_x(t)), \quad t\ge 0.
\end{align}
The basin of attraction of a point $x^*$ is $\{ x: \gamma_x(\infty) = x^* \}$. Note that the basin of attraction of $x^*$ is empty unless $x^*$ is a critical point. It turns out that, if $f$ is a Morse function \citep{milnor1963morse}, meaning that it is twice differentiable and its Hessian is non-singular at every critical point, then the basins of attraction corresponding to local maxima, sometimes called modes, provide a partition of the support of $f$ up to a set of zero measure. See, e.g., \cite[Lem 2.2]{arias2025clustering}, which is argued entirely based on \cite[Cor 3.3, Th 4.2]{banyaga2013lectures}.

A number of mode-seeking algorithms have been proposed to approximate the gradient flow lines \cite{carreira2015}, some of them analyzed and proved to be consistent in \cite{arias2025clustering}. In the same paper, another algorithm was proposed, called Max Shift, which consists in iteratively moving to the point with highest (estimated) density value. 
With a search radius $r > 0$, which is the only tuning parameter of the method, starting at a given point a sequence is constructed by iteratively including a point within distance $r$ with highest density value: if the starting point is $x_0$, then Max Shift computes the sequence $(x_k)$ given by
\begin{align}
\label{max_shift_r}
x_{k+1} \in  \argmax\big\{f(x) : x \in \bar{B}(x_k, r)\big\}, \quad k\ge 0,
\end{align}
where $\bar B(y, r)$ is the closed ball centered at $y$ of radius $r$.
Ties are broken in some arbitrary but deterministic way, e.g., by following the lexicographic order. The algorithm is hill-climbing in the sense that at each step it `climbs' the landscape given by the density. Max Shift clusters together points whose sequence have the same endpoint. 

In practice, the density $f$ is unknown and the straightforward strategy is to apply the procedure to an estimate of the density. In detail, given data points $\cY = \{y_1, \dots,y_n\}$ assumed to be iid from $f$, the Max Shift method utilizes a kernel density estimator $\hat f_\eps$ with bandwidth $\eps$ to estimate the density and proceeds as before but with $\hat f_\eps$ in place of $f$. In the so-called medoid variant, only data points are considered, resulting in the following procedure: starting at some data point $y_{i_0} \in \cY$, 
\begin{align}
\label{max_shift_method_r}
y_{i_{k+1}} \in  \argmax\big\{\hat f_\eps(y) : y \in \cY \cap \bar{B}(y_{i_k}, r)\big\}, \quad k\ge 0.
\end{align}
For references on medoid variants of various hill-climbing clustering methods, see \cite[Appendix B]{arias2025clustering}. 

Max Shift and its medoid variant are shown in \cite{arias2025clustering} to be asymptotically consistent in the sense of \citet{fukunaga1975} (formally defined in \secref{consistency}) under some conditions on the underlying density $f$ and on the kernel and bandwidth defining the density estimate. Specifically, it is assumed the density converges to zero at infinity; that it is twice differentiable with uniformly continuous zeroth, first, and second derivatives; and that it is of Morse regularity; and it is assumed that the kernel used is of second order and the bandwidth is chosen so that the density estimator is second order consistent, meaning that $\eps = \eps_n \to 0$ in such a way that, as $n\to\infty$, 
\begin{equation}
\begin{gathered}
\sup_x |\hat f_\eps(x) - f(x)| \to 0, \qquad
\sup_x \|\nabla \hat f_\eps(x) - \nabla f(x)\| \to 0, \qquad 
\sup_x \|\nabla^2 \hat f_\eps(x) - \nabla^2 f(x)\| \to 0.
\end{gathered}
\end{equation}

Our proposed algorithm for graph clustering, introduced later in \secref{algorithm}, coincides with Max Shift (based on a particular kernel density estimator) in the case of a geometric graph. However, the consistency of Max Shift established in \cite{arias2025clustering} does not extend to the present setting, as the kernel function that Max Shift implicitly uses does not satisfy the conditions required \cite{arias2025clustering}. 
%We make this point again in \remref{flat}.

\section{Graph Max Shift}
\label{sec:algorithm}

This is the section where we introduce our algorithm for graph partitioning that, when the graph under consideration happens to be a random geometric graph, implements a form of gradient ascent flow clustering of the underlying positions. 

\subsection{Description} 

We describe the graph clustering method for a general graph with $n$ nodes, taken to be $[n] := \{1, \dots, n\}$ without loss of generality, and adjacency matrix $A = (a_{ij})$ where, as usual, $a_{ij} = 1$ exactly when $i$ and $j$ are neighbors in the graph, and $=0$ otherwise. We will use the convention that $a_{ii} = 1$ for all $i$, so that each node is a neighbor to itself. We will sometimes use the notation $i \neigh j$ to indicate that $i$ and $j$ are neighbors, i.e., $a_{ij} = 1$.
The graph could be, but is not necessarily a geometric graph. 
The degree function is defined as usual, $q_i = \# \{j : j \neigh i \}$.
%For an unweighted graph, $h = 1$ is the only nontrivial choice, and in that case $q_i$ is simply the degree of node $i$ in the graph. 

Graph Max Shift, initialized at node $i$, computes the following sequence of nodes:
\begin{equation}
\label{graphmaxshift}
i_0 = i; \qquad i_t \in \argmax \{q_j : j \neigh i_{t-1}\}, \quad t\ge 1.
\end{equation}
Ties are broken in some arbitrary, but deterministic way, e.g., by choosing the node with highest numerical index among maximizers. 
We call the path $(i_t)$ the hill-climbing path originating from $i$ where the landscape is given by the degree function $q$. In plain words, Graph Max Shift iteratively moves to a neighbor with highest degree. 
We then cluster together nodes whose hill-climbing paths end at the same terminal node. Optionally, we merge together any two clusters whose associated terminal nodes are within $\tau$ hops --- in which case $\tau \ge 1$ is a tuning parameter of the method (the only one).

% As we will show in proof and numerical experiments, generally setting $r=h$ is fine, though in some cases if may be more efficient to use different values. 

\subsection{Connections to other algorithms}
\label{sec:connections}

\subsubsection{Connection to Max Shift}
\label{sec:max_shift}

Our claim is that Graph Max Shift implements gradient ascent flow clustering when applied to a geometric graph. Let $\cY := \{y_1, \dots, y_n\}$ be a set of points in some Euclidean space, say, $\bbR^d$, and consider the neighborhood graph with connectivity radius $\eps$, denoted $\cG(\cY ; \eps)$, defined by the adjacency matrix $a_{ij} = 1$ if $\|y_i - y_j\| \le  \eps$, and $= 0$ otherwise.  

It is straightforward to check that Graph Max Shift applied to $\cG(\cY ; \eps)$ will compute the same hill-climbing paths as Max Shift \eqref{max_shift_method_r} applied to $\cY$ when using the so-called flat kernel
\begin{equation}
\label{kernel}
K(x) = v_d^{-1} \bI(\|x\|\le 1),
\end{equation}
where $v_d$ is the volume of the unit ball in $\bbR^d$, and when the bandwidth and search radius are taken to be $\eps$.
Indeed, with that kernel function and bandwidth, the density estimator used by Max Shift is given by
\begin{equation}
\label{kde}
\hat f_\eps(x) = \frac{1}{n \eps^d} \sum_{i=1}^n K\Big(\frac{x-y_i}{\eps}\Big) = \frac{\#\{i : \|x-y_i\| \le \eps\}}{v_d n \eps^d},
\end{equation}
so that, when applied to a data point, 
\[
\hat f_\eps(y_i) = \frac{q_i}{v_d n \eps^d} \propto q_i.
\]
When started at $y^0 \in \cY$, the sequence built by Graph Max Shift, from the point of view of the latent positions, is given by
\begin{align}
\label{graph_max_shift}
y^{k+1} \in  \argmax\limits_{y \in \cY \cap \bar{B}(y^k, \eps)} \hat f_\eps(y), \quad k\ge 0,
\end{align}
which is exactly the sequence computed by Max Shift applied to $\cY$ based on the kernel density estimator $\hat f_\eps$.
(Note that $y^k$ above corresponds to $y_{i_k}$ for some $i_k \in [n]$.)

\begin{remark}
We remark that, in the situation where the user is only provided with an adjacency matrix, which is the one we envision, when applying Graph Max Shift the user does not have a choice of kernel function $K$ or bandwidth $\eps$. 
\end{remark}

\subsubsection{Connection to Max Slope Shift}
\label{sec:koontz}
\citet{koontz1976} proposed an algorithm for clustering points in space, which is called Max Slope Shift in \cite{arias2025clustering} for reasons that will become clear shortly. Given a point set $\cY := \{y_1, \dots, y_n\}$, it forms the weighted neighborhood graph with connectivity radius $\eps$, namely, the graph on $[n]$ with dissimilarity between nodes $i, j \in [n]$ given by $\delta_{ij} = \|y_i - y_j\|$ if $\|y_i-y_j\| \le \eps$, and $\delta_{ij} = \infty$ otherwise. 
Here $i$ and $j$ are considered neighbors if $\delta_{ij} < \infty$, and the degree function is defined as before, $q_i = \#\{j : j \neigh i\}$.
Then, starting at node $i$, Max Slope Shift computes the following sequence of nodes:
\begin{equation}
\label{maxslopeshift}
i_0 = i; \qquad i_t \in \argmax \left\{\frac{q_j - q_{i_{t-1}}}{\|y_j - y_{i_{t-1}}\|} : j \neigh i_{t-1}\right\}, \quad t\ge 1.
\end{equation}

As argued in \cite{arias2025clustering}, it is not hard to see that the end points will be local maxima not of the density itself, but of its gradient, meaning, density inflection points. The algorithm thus fails to output an accurate clustering as defined by \citet{fukunaga1975}. (A regularization is proposed in \cite{arias2025clustering} to remedy this behavior.)

Max Slope Shift can be nonetheless seen to be closely related to ours. Indeed, it implies the use of the flat kernel for density estimation; and, if one were to imagine how it would be adapted to clustering an unweighted graph, one would end up with Graph Max Shift, as the natural thing to do in that case is to replace the Euclidean distance in the denominator in the fraction appearing in \eqref{maxslopeshift} with~1.

\subsubsection{Connection to Morse clustering}
\label{sec:strazzeri}
Graph Max Shift can be seen as a special case of a class of clustering algorithms called Morse clustering recently proposed by \citet{strazzeri2022}. 
A Morse clustering algorithm is based on an edge preorder and a node preorder, which we denote by $\preceq_{E}$ and $\preceq_{N}$, respectively. The choice of these preorders defines the algorithm, which proceeds as follows: Starting at $i_0 = i$, if the path is at $i_{t-1}$ at step $t-1$, the path ends there unless 1) there is a node $j \neigh i_{t-1}$ such that $(i_{t-1},k) \prec_{E} (i_{t-1},j)$ for all $k \neigh i_{t-1}, k \ne j$; and 2) $i_{t-1} \prec_{N} j$; if conditions 1) and 2) are met, then the node $j$ is unique, and the process continues with $i_t = j$. 
Doing this for all nodes $i$ defines what \citet{strazzeri2022} call the Morse flow.
Nodes whose flows end at the same terminal node are clustered together.

Graph Max Shift is a Morse clustering algorithm. Indeed, for convenience, assume that Graph Max Shift breaks ties between nodes by selecting the largest one when nodes are represented by their indices. Then define the node preorder $i \preceq_{N} j$ if $i=j$, or $q_i = q_j$ and $i < j$, or $q_i < q_j$; and define the edge preorder $(i,j) \preceq_{E} (i,k)$ if $j \preceq_{N} k$. Then it is a matter of checking that the Morse clustering algorithm with these two preorders coincides with Graph Max Shift.

Although our algorithm belongs to the class of Morse clustering algorithms, our motivation and that of \citet{strazzeri2022} are very different. Indeed, the impetus in \cite{strazzeri2022} for proposing Morse clustering algorithms is to address the impossibility result of \citet{kleinberg2002}. In the context of weighted graphs, Kleinberg proposes three axioms that a clustering algorithm should satisfy, {\em scale-invariance}, {\em richness}, and {\em consistency}, and then proceeds to show that no algorithm satisfies all three. 
\citet{strazzeri2022} offer some criticism of the consistency axiom, propose to replace it by a weaker axiom which they call {\em monotonic consistency}, and show that there is at least one Morse clustering algorithm which satisfies this new axiom, and the other two axioms of scale-invariance and richness.   

Due to the fact that the intention is very different to ours in the present work, it is not surprising that there is no analysis in \cite{strazzeri2022} of asymptotic consistency in the sense considered here, i.e., the recovery of gradient ascent flow clustering in the context of a random geometric graph. 
 
\subsection{Consistency}
\label{sec:consistency}  

In this section, we establish the asymptotic (statistical) consistency of Graph Max Shift when applied to $\cG(\cY, \eps_n)$, when $\cY := \{y_1, \dots, y_n\}$ is generated iid from a density $f$ on $\bbR^d$ which satisfies some regularity conditions, and when $\eps_n \to 0$ slowly enough. In more detail, the reference or population partition is the one defined by \citet{fukunaga1975}, namely, $i$ and $j$ are in the same population cluster if $y_i$ and $y_j$ belong to the basin of attraction of the same mode of $f$. 
Consider the distance between two partitions of $[n]$ defined as the fraction of pairs of nodes $i \ne j$ that are grouped together in one partition but not in the other partition. This distance is equal to one minus the Rand index~\cite{rand1971objective} --- a measure of similarity between partitions popular in the clustering literature.
We say that Graph Max Shift, or any other clustering algorithm for that matter, is consistent if the distance between the partition it outputs and the reference partition tends to zero in probability as $n\to\infty$, or equivalently, if the Rand index tends to one. 
%More loosely, a clustering method is consistent if the fraction of nodes that are incorrectly clustered converges to zero as the sample size increases. 
By contrast, a clustering method is weakly consistent if the fraction of pairs of points that belong to different population clusters and are clustered together by the method tends to zero in probability. 
In essence, a method is consistent if it recovers the population partition asymptotically, while a method is weakly consistent if it recovers a possibly finer partition asymptotically.

More formally, and for future reference in the proof section, let $(\bB_k)$ denote the basins of attraction of density modes. (Note that there could well be infinitely many of them, although only countably many under our assumptions on the density.) Given $\cY$, the reference clusters are $\bI_k = \{i: y_i \in \bB_k\}$. Under our assumptions on the density, with probability one, $[n] = \bigcup_k \bI_k$, and this is the reference partition of the graph in the present context. 
A cluster method takes the graph adjacency matrix $A$ and returns a partition of $[n]$, say $[n] = \bigcup_k \widehat\bI_k$. 
We write $\bI(i,j) = 1$ if there is $k$ such that $i,j \in \bI_k$, and $\bI(i,j) = 0$ otherwise, and $\widehat\bI(i,j)$ has a similar meaning.
Under the model of a random geometric graph described above, the method is weakly consistent if 
\begin{align}\label{weakly consistent}
\frac{\#\big\{i < j: \widehat\bI(i,j) < \bI(i,j) \big\}}{\binom{n}{2}} \longrightarrow 0,\ \text{in probability as $n \to \infty$}; 
\end{align}
and it is consistent if
\begin{align}\label{consistent}
\frac{\#\big\{i < j: \widehat\bI(i,j) \ne \bI(i,j) \big\}}{\binom{n}{2}} \longrightarrow 0,\ \text{in probability as $n \to \infty$}. 
\end{align}
(The proportion in \eqref{consistent} is the distance between the partitions $(\bI_k)$ and $(\widehat\bI_k)$.) 

\begin{remark}
\label{rem:flat}
The consistency of Graph Max Shift is not obtained as a special case of the consistency results established in \cite{arias2025clustering}, which crucially depend on the use of a second order kernel. The simplicity of Max Shift allows us to simplify the arguments in \cite{arias2025clustering} (which were developed for generic hill-climbing methods) and, in particular, avoid the more complex aspects having to do with the stability of the basins of attraction to perturbations of the density.
\end{remark}

\begin{theorem} 
\label{thm:consistency}
Suppose $\cY_n := \{y_1, \dots, y_n\}$ is generated iid from a density $f$ on $\bbR^d$ with compact support, twice continuously differentiable, and of Morse regularity, and consider Graph Max Shift applied to the adjacency matrix of $\cG(\cY_n, \eps_n)$  with $\eps_n \to 0$. 
\begin{enumerate}[itemsep=0in]
\item[(i)] {\em Without merging.}\  
If $\eps_n \gg \big(\frac{\log n}n\big)^{1/(d+2)}$, the method is weakly consistent.
\item[(ii)] {\em With merging.}\ 
There is $C = C(f)$ such that, if $\tau \ge 1/C$ and $\eps_n \gg \big(\frac{\log n}n\big)^{1/\max\{d+4,\, 2d\}}$, the method is consistent.
\end{enumerate}
\end{theorem} 

The proof of \thmref{consistency} is in \secref{proofs}. We note that the assumption of compact support is for simplicity, as we anticipate the consistency result extends to densities that simply converge to zero at infinity as long as $f$ and its first and second derivatives are bounded --- which is the setting considered in \cite{arias2025clustering}. 
The conditions on $\eps_n$ are not particularly intuitive and arise in the course of the proof. In particular, ignoring the logarithmic factor, we do not know if the exponent is optimal in either {\em (i)}   or {\em (ii)} --- although this sort of `curse-of-dimensionality' dependence on the dimension is to be expected in the nonparametric setting that we consider given where only smoothness assumptions are made on the density.

% We note that the choice of tuning parameter $\tau = 1$ --- meaning that clusters whose representative nodes are direct neighbors are merged --- yields a consistent algorithm in the setting of the theorem when $\eps_n \to 0$ slowly enough that $n \eps_n^{d+4}/\log n \to \infty$. 

The lower bound on the number of hops $\tau$ for merging in post-processing depends on $f$ in particular through the distance between its modes. 
We note that in practice, the choice of tuning parameter $\tau = 1$ --- meaning that clusters whose representative nodes are direct neighbors are merged --- is often sufficient for good clustering performance, though without the merging post-processing step, it is possible that some of the population clusters defined by the density gradient ascent basins of attraction could be split into multiple clusters. 

\section{Numerical Experiments}
\label{sec:numerics}

In this section, we demonstrate the behavior of Graph Max Shift on random geometric graphs with different underlying densities. The implementation is built upon the igraph package \cite{csardi2006}. Python scripts to reproduce all figures in this section are available at \url{https://github.com/lizzycoda/GraphMaxShift}. 
%In these results, if there is a tie between nodes $i$ and $j$, we select node $\min\{i,j\}$. 

It should be noted that these numerical experiments are only meant to provide some confirmation for our theory. Graph Max Shift is not intended to improve the state-of-the-art, and in particular we did not compare with any other algorithm. (In fact, as we tried to make clear in the Introduction, we do not know of an algorithm that was proposed for the same purpose, so that there is no clear competitor.)

In \figref{mixtures} we demonstrate the consistency of Graph Max Shift on various (realizations of) random geometric graphs, where the underlying density $f$ is a Gaussian mixture with different numbers of components. We use the same mixture distributions as used in \cite{chacon2015}. For each distribution, we sample $n = 10^4$ data points, and apply the Graph Max Shift method without merging ($\tau = 1$). The value of $\eps$ used to construct $\cG(\cY ; \eps)$ is indicated in each figure, and was chosen to obtain a low clustering error. The solid black lines represent the borders between the clusters defined by the gradient ascent flow and were found using the numerical method described in Section 3.1 of \cite{chacon2015}. The method is based on Theorem 1 in \cite{ray2005}. For all distributions, most points are clustered together correctly, with some errors near the borders of the different basins of attractions or in low density regions. 
%This behavior is accordant with \thmref{consistency}, where we showed that with large enough sample size and appropriate choices of parameters, the clustering will be correct for points in an upper level set bounded away from the borders. 

\begin{figure}[h]
    \centering
\includegraphics[width=0.6\linewidth]{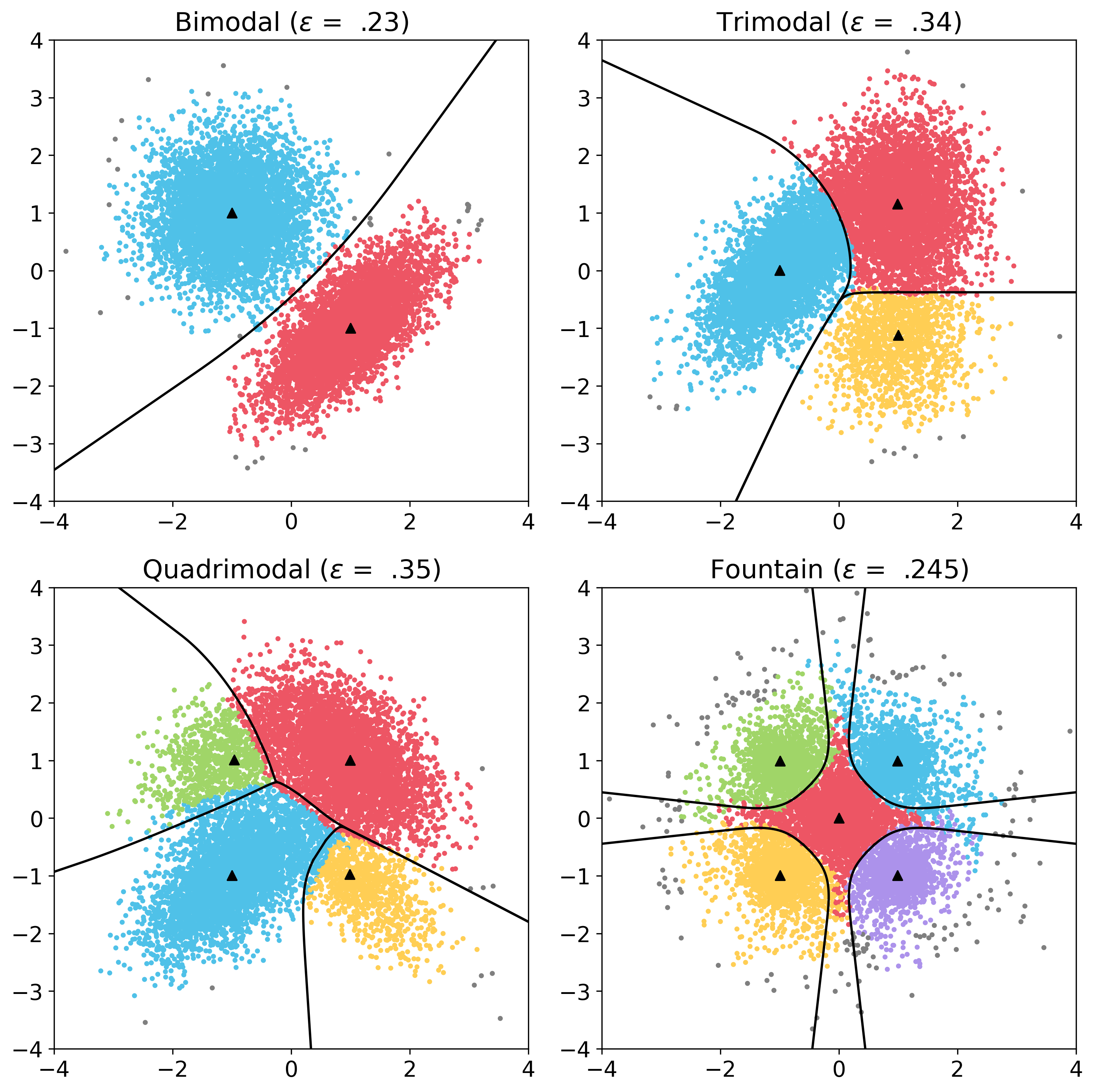}
    \caption{Graph Max Shift applied to $\cG(\cY; \eps)$ where the sample, of size $n = 10^4$, is drawn from different normal mixtures. The solid black lines represent borders between the basins of attraction (population clusters) and the triangles represent the locations of the modes of the density. As the means of the mixture components all either coincide with the modes or are very close to the modes, we do not plot them. Data points are colored according to the clustering obtained via Graph Max Shift.  Points that belong to clusters with fewer than $25$ points are colored in gray.}
    \label{fig:mixtures}
\end{figure}

\figref{tuning} depicts the qualitative effects of adjusting $\eps$ in $\cG(\cY; \eps)$ holding the tuning parameter $\tau$ fixed. When $\eps$ is too small, the density estimator is not very smooth, which results in too many modes. In the clustering step, the small search radius obtains oversegmented clusters. When $\eps$ is large, the density estimator is too smooth in the sense that some modes of the underlying density are not recovered, and coupled with the large search radius that allows points to `cross' the basins of attraction, the result is an inaccurate clustering.

In \figref{tuning_quant} we quantify the effects of adjusting $\eps$, and plot the weak clustering error as defined in \eqref{weakly consistent} and the clustering error as defined in \eqref{consistent}. In the lefthand plot, the weak clustering error remains small over a range of $\eps$ small enough. Note that when $\eps$ is smaller than the minimum distance between two points, each vertex will correspond to its own cluster and the weak clustering error will be zero. In contrast, the condition on $\eps$ in  \thmref{consistency} is imposed to ensure that points in the basin of attraction of a particular mode are moved by Graph Max Shift to a point close to that mode. For example, in \figref{tuning} when $\eps = 0.18$ or $\eps = 0.2$ the weak clustering error is small, though not all terminal nodes are close to the modes of the density. When $\eps = 0.3$, the weak clustering error remains small, and the terminal nodes are all close to the modes of the density. In the righthand plot of  \figref{tuning_quant}, the clustering error is small for a range of $\eps$ not too small and not too large. This reflects the oversegmentation when $\eps$ is too small and the crossing of the basins of attraction when $\eps$ is too large. Moreover, as $n$ increases, the values of $\eps$ for which the error is small moves to the left, reflecting the condition in \thmref{consistency}.  

\begin{figure}[h]
    \centering
\includegraphics[width=1\linewidth]{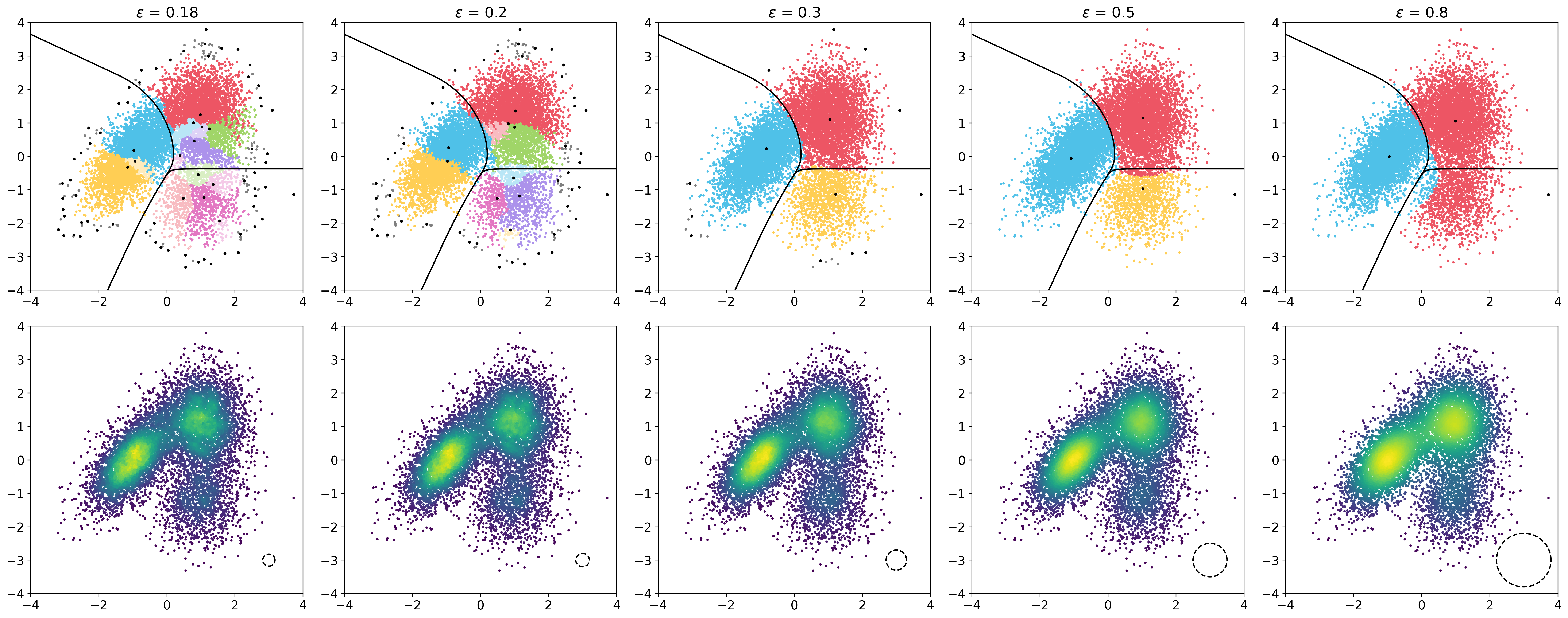}
    \caption{Graph Max Shift applied to $\cG(\cY;\eps)$ with data drawn from a trimodal Gaussian mixture. The top row shows the obtained clustering with the indicated $\eps$ and $\tau = 1$. Terminal nodes are highlighted in black. The bottom row depicts the degree of each node, which is proportional to the value of the density estimator implicitly computed. Additionally, each plot includes a ball of radius $\eps$ in the bottom right for reference. The errors of each clustering are quantified in \figref{tuning_quant}}
    \label{fig:tuning}
\end{figure}

\begin{figure}[h]
    \centering
\includegraphics[width=.8\linewidth]{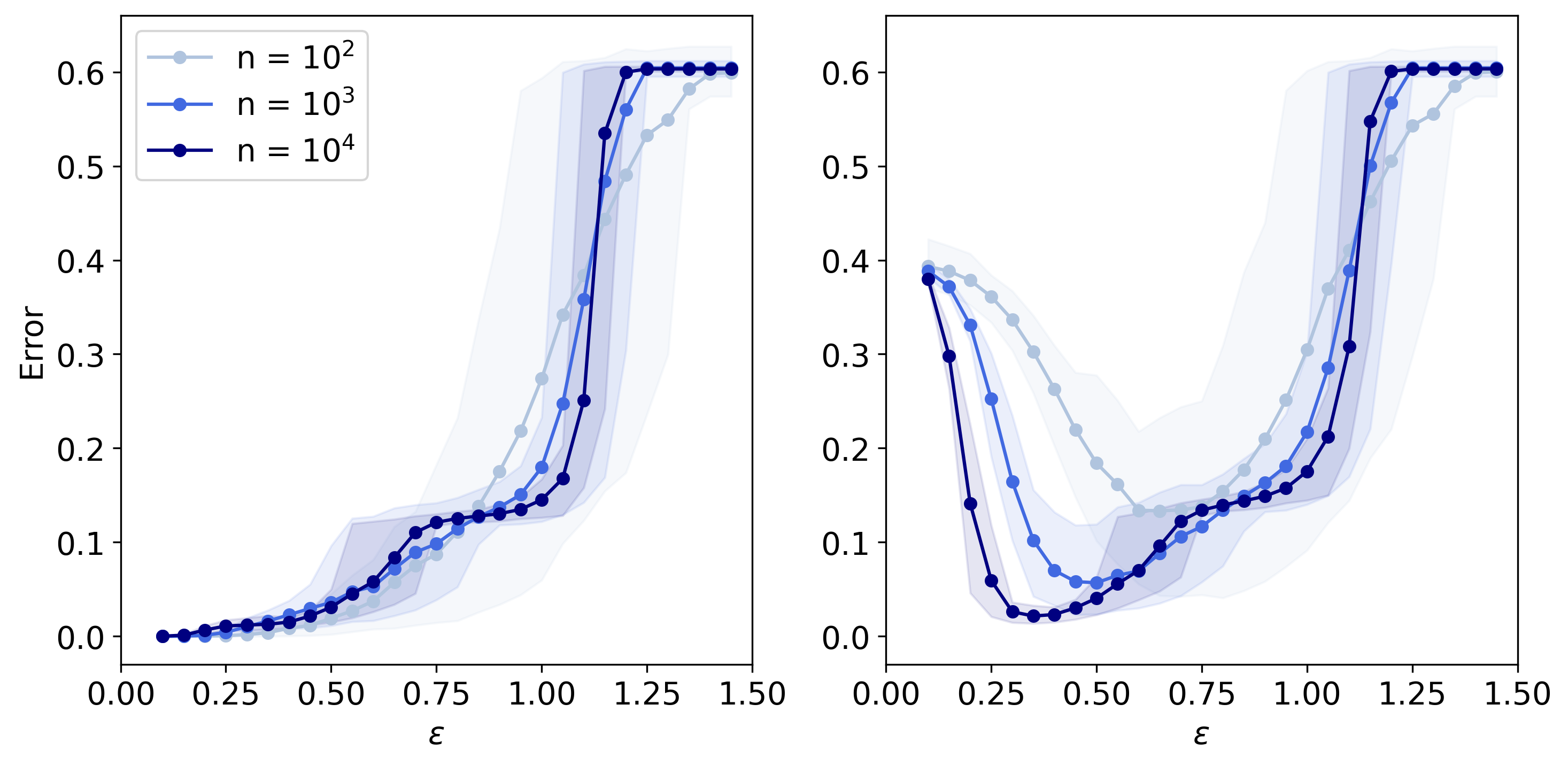}
    \caption{The error as a function of $\eps$ when Graph Max Shift is applied to $\cG(\cY;\eps)$ with $\tau = 1$ for data drawn from the trimodal Gaussian mixture for different values of $n$. The left plot shows the weak clustering error \eqref{weakly consistent} and the right plot shows the clustering error \eqref{consistent}. Solid lines indicate the  error averaged over $100$ simulations, and the shaded regions indicate empirical 80\% intervals. Clusterings for select values of $\eps$ when $n=10^4$ are shown in \figref{tuning}.}
    \label{fig:tuning_quant}
\end{figure}

Finally, in \figref{paths} we plot some paths constructed by Graph Max Shift for the trimodal mixture corresponding to $\eps = 0.4$ and $\tau = 1$. We also include an approximation of the Max Shift sequence $(x_k)$ based on $f$ itself. The proof of the main theorem relies on the sequences closely following one another, and indeed we observe this and see that they end close to the associated mode.

\begin{figure}[h]
    \centering
    \includegraphics[width=0.5\linewidth]{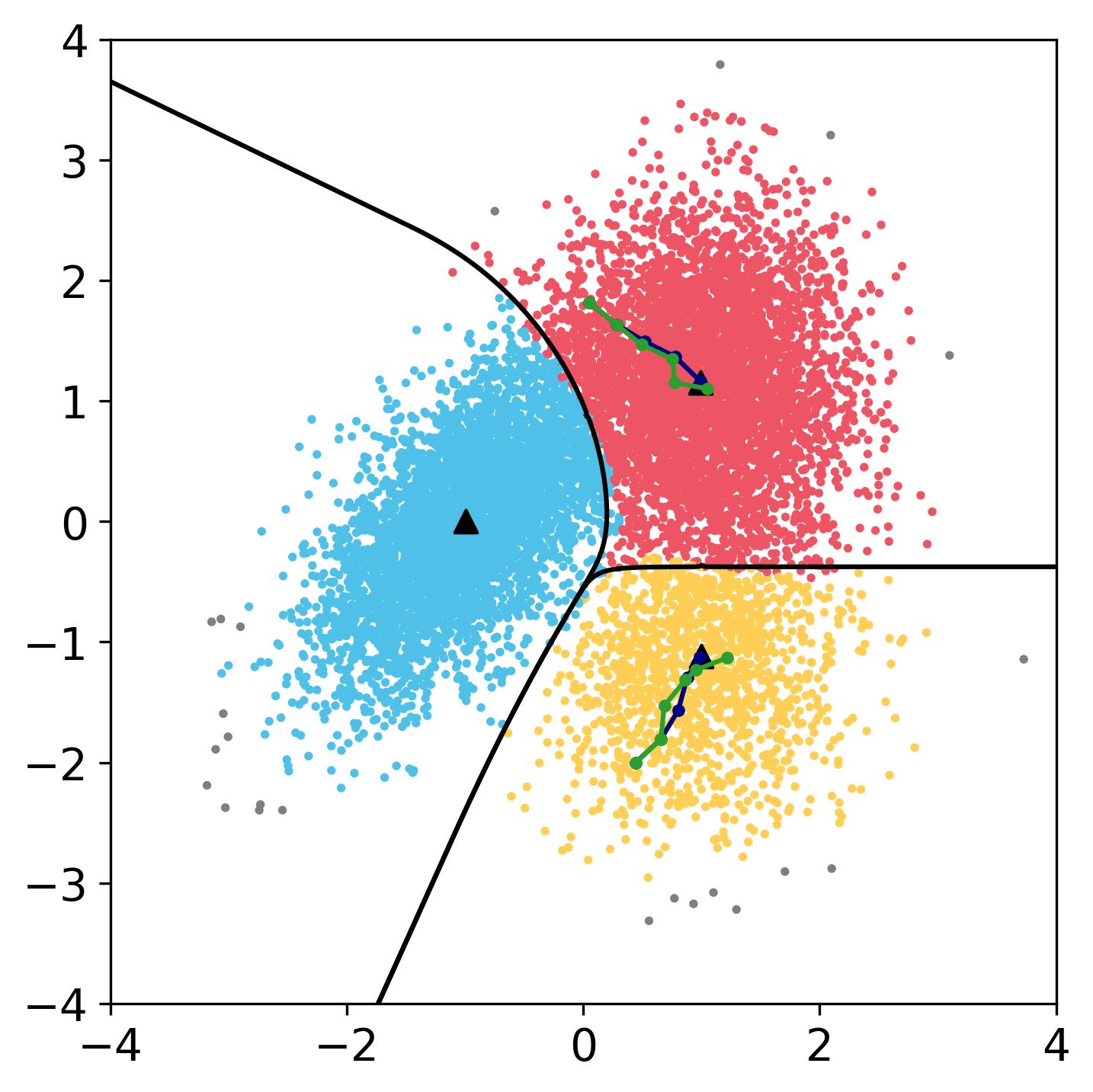}
    \caption{Graph Max Shift path $(y^k)$ in green and an approximation of the Max Shift path $(x_k)$ based on $f$ itself in navy blue. Modes of the density are plotted as triangles.}
    \label{fig:paths}
\end{figure}

We find that for some densities, even when the data is just two dimensional,  Graph Max Shift can have a large variance, even with sample sizes as large as $n \approx 10^5$. Additional quantitative plots for the other Gaussian mixtures in \figref{mixtures} are included in the Appendix which also depict this variance. We also include an additional example where the clustering task is more difficult and adjusting the tuning parameter is necessary to achieve small clustering error. 

%Memory constraints limited our running experiments on larger graphs.
% We include additional plots in Appendix~\ref{sec:additional_numerics} where $n$ is varied and the graph is not a random geometric graph. \ery{remove appendix and refer to github for additional numerical experiments?}

\section{Discussion}
\label{sec:discussion}

%\subsection{Merging parameter}
%We could not discard the presence of multiple points of local maximum degree in the vicinity of the same density mode (as specified by \lemref{local_convergence}), so that, without merging in post-processing, Graph Max Shift is only guaranteed (in the context of \thmref{consistency}) to be weakly consistent. 
%To enable consistency, we opted for a post-processing step where clusters with nearby representative nodes are merged together. Even though less than elegant, this sort of step is not unusual in the literature. The choice of this tuning parameter is not straightforward, as is the case of most tuning parameters of clustering (and other unsupervised learning) methods, but is fairly intuitive here: set the parameter $\tau$ to less than half the minimum distance between two cluster centers that the user wants to be able to distinguish. Side information specific to an application could provide some insight into that. (But, again, we do not want to emphasize the practical nature of the proposed method.) 

\subsection{Multi-hop variant}
\label{sec:multi-hop}

When Graph Max Shift as presented in \secref{algorithm} is applied to a geometric graph, the connectivity radius cannot be chosen --- it is not a tuning parameter.  One way to introduce some flexibility in the algorithm is to open the possibility of enlarging the search radius. In the context of an unweighted graph, the search radius is understood as the number of hops. With $m$ denoting the number of hops defining the local search area, which is now a tuning parameter, Graph Max Shift, initialized at node $i$, computes the following sequence of nodes:
\begin{equation}
%\label{graphmaxshift}
i_0 = i; \qquad i_t \in \argmax \{q_j : j \neigh_m i_{t-1}\}, \quad t\ge 1,
\end{equation}
where $i \neigh_m j$ if $i$ and $j$ are within $m$ hops away, or said differently, if the graph distance separating them is bounded by $m$. 

We anticipate the consistency result established in \thmref{consistency} to generalize in a straightforward manner under some appropriate conditions on $\eps$ and $m$. 
We note that choosing $m$ is nontrivial, as is typically the case for the choice of tuning parameters in unsupervised learning. We also note that, in practice, unless $m$ is quite small, the algorithm can be slow to run.

\subsection{Weighted graphs}
\label{sec:weighted}

We have focused on unweighted graphs, but the algorithm can be easily adapted to work with weighted graphs. Consider a graph with $n$ nodes, taken to be $[n] := \{1, \dots, n\}$ without loss of generality, and dissimilarity $\delta_{ij}$ between nodes $i$ and $j$, with $\delta_{ij} = \infty$ if $i$ and $j$ are not neighbors. A straightforward way to apply Graph Max Shift to such a graph is to apply a threshold so that it becomes an unweighted graph. Specifically, $i$ and $j$ would be redefined as neighbors if $\delta_{ij} \le h$, where $h$ is now a tuning parameter. The multi-hop variant described in \secref{multi-hop} is best implemented by defining the local search area around node $i$ as $\{j : \delta_{ij} \le r\}$, where $r \ge h$ is another tuning parameter. The correspondence with the number of hops $m$ introduced in \secref{multi-hop} is roughly $r \approx m h$ --- see \cite{arias2021estimation}, where this simple observation plays a central role.
The most appropriate situation here would be a weighted geometric graph, $\cG(\cY, \eps)$, where the weights are given by $\delta_{ij} = \|y_i-y_j\|$ if $\|y_i-y_j\| \le \eps$ and $=\infty$ otherwise. 

We expect the consistency result established in \thmref{consistency} to generalize to this setting under some appropriate conditions on $\eps$, $h$, and $r$, following the same line of arguments. 
Here too, choosing the tuning parameters $h$ and $r$ is challenging.

\section{Proofs}
\label{sec:proofs}

As we noted earlier on the in paper, the arguments supporting \thmref{consistency} are similar to those establishing the consistency of Max Shift in \cite{arias2025clustering}, but not exactly the same as the flat kernel \eqref{kernel} in use here is not twice differentiable or 2nd order, and the arguments in \cite{arias2025clustering} rely crucially on these properties being satisfied by the kernel function used to produce a density estimate. The reason for that is that the proof structure is laid out in \cite{arias2025clustering} to be applicable not just to Max Shift but to other methods such as Mean Shift. Because we focus on (Graph) Max Shift, we are able to tailor the arguments to the particular situation we are confronted with --- and actually simplify them in the process.

\subsection{Preliminaries}

\subsubsection{Density}
Recall that the density $f$ (on $\bbR^d$) is assumed to be compactly supported, twice continuously differentiable, and of Morse regularity. 
Define
\begin{equation}
\label{kappa}
\kappa_0 = \sup_{x \in \bbR^d} f(x), \qquad 
\kappa_1 = \sup_{x \in \bbR^d}\| \nabla f(x)\|, \qquad 
\kappa_2 = \sup_{x \in \bbR^d}\| \nabla^2 f(x)\|, 
\end{equation}
which are well-defined and finite by the fact that $f$ is assumed to be compactly supported and twice continuously differentiable.
Then, by Taylor expansion, the following inequalities hold for all $x, y$:
\begin{align}
\label{lipschitz}
    |f(x) - f(y)| \le \kappa_1 \|x-y\|; 
\end{align}
\begin{align}
\label{grad_lipschitz}
    \|\nabla f(x) - \nabla f(y)\| \le \kappa_2 \|x-y\|; 
\end{align}
\begin{align}
\label{taylor}
    \big|f(y) - f(x) - \nabla f(x)^\top (y-x)\big| \le \kappa_2 \|x-y\|^2. 
\end{align}
(In fact, $\kappa_2$ above can be replaced by $\kappa_2/2$, but using this looser inequality avoids carrying the $1/2$ factor around while it plays no essential role in our developments.)
Whenever $\nabla f(x)  \neq 0$, we denote the normalized gradient by 
\begin{equation}
\label{N}
N(x) := \frac{\nabla f(x)}{\| \nabla f(x)\|},
\end{equation}
which is differentiable when well-defined, with derivative equal to 
\begin{align}
\label{DN}
DN(x) = \frac{\nabla^2 f(x)}{\|\nabla f(x)\|} - \frac{\nabla f(x) \nabla f(x)^\top \nabla^2 f(x)}{\|\nabla f(x)\|^2}. 
\end{align}

\subsubsection{Upper level sets}
The upper $s$-level set of $f$ is defined as
\[\cU_s = \{x: f(x) \ge s\}.\]
We let $\cC_s(x)$ refer to the connected component of $\cU_s$ containing $x$.
%These connected components correspond to the clusters in Hartigan's definition of a cluster tree \cite{hartigan1975}, which is a perspective of clustering closely related to clustering via the gradient lines \cite{arias2023}. 

\begin{lemma} 
\label{lem:cluster_inclusion}
Let $x^*$ be a mode of $f$ and let $s^* = f(x^*)$. There exists $\lambda_1 \ge \lambda_0 > 0$ and $\delta > 0$ such that, for any $x \in \bar{B}(x^*, \delta)$, the eigenvalues of $\nabla^2 f(x)$ belong to $[-\lambda_1, -\lambda_0]$. Also, for any $s \in (s^* - \frac{1}{2}\lambda_0 \delta^2, s^*)$,
\begin{equation}
\label{cluster_inclusion}
\bar{B}\left(x^*, \sqrt{2 \lambda_1^{-1}(s^* - s)} \right) \subset \cC_s(x^*) \subset \bar{B}\left(x^*, \sqrt{2 \lambda_0^{-1}(s^* - s)}\right).
\end{equation}
\end{lemma}

\begin{proof} 
The first part of the statement relies on the fact that $\nabla^2 f(x^*)$ is negative definitive and that $\nabla^2 f$ is continuous. The second part is a restatement of Lemma~5.6 in \cite{arias2023}.
\end{proof}

For the kernel density estimator $\hat f = \hat f_\eps$, we will denote its upper $s$-level set and connected component containing $x$ as $\hat{\cU}_s$ and $\hat{\cC}_s(x)$, respectively, leaving $\eps$ implicit whenever possible.

\subsubsection{Approximations}
Recalling that $\hat f$ denotes the kernel density estimator that Graph Max Shift implicitly computes, which was defined in \eqref{kde}, set
\begin{equation}
\label{eta}
\eta := \sup_{x \in \bbR^d} \big|\hat f(x) - f(x)\big|.
\end{equation}

\begin{lemma}
\label{lem:hatC}
For any mode $x^*$ and any $s > 0$,  
\[\cC_{s + \eta}(x^*)  \subset \hat{\cC}_{s}(x^*) \subset \cC_{s - \eta}(x^*).\]
\end{lemma}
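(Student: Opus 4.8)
The claim is a purely topological comparison of level sets of $f$ and $\hat f$, driven by the uniform bound $\eta = \sup_x |\hat f(x) - f(x)|$. The plan is to prove the two inclusions $\cC_{s+\eta}(x^*) \subset \hat\cC_s(x^*)$ and $\hat\cC_s(x^*) \subset \cC_{s-\eta}(x^*)$ separately, each by the same two-step scheme: first show a containment of the \emph{full} level sets $\cU_{s+\eta} \subset \hat\cU_s$ (respectively $\hat\cU_s \subset \cU_{s-\eta}$), then promote this to a containment of the connected components through $x^*$.

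First I would establish the level-set containments. If $x \in \cU_{s+\eta}$, then $f(x) \ge s + \eta$, and since $\hat f(x) \ge f(x) - \eta \ge s$, we get $x \in \hat\cU_s$; this gives $\cU_{s+\eta} \subset \hat\cU_s$. Symmetrically, if $x \in \hat\cU_s$ then $f(x) \ge \hat f(x) - \eta \ge s - \eta$, so $x \in \cU_{s-\eta}$, giving $\hat\cU_s \subset \cU_{s-\eta}$. Both are immediate from the definition of $\eta$.

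Next I would upgrade these to the connected-component statements. The key observation is that $x^*$ is a mode of $f$ with $f(x^*) = s^* := \sup f$ in a neighborhood, so in particular $x^* \in \cC_{s+\eta}(x^*)$ and $x^* \in \cC_{s-\eta}(x^*)$ are nonempty (assuming $s + \eta \le s^*$, which holds for the values of $s$ relevant to the consistency proof, and otherwise the left-hand set is empty and the inclusion is vacuous). For the first inclusion: $\cC_{s+\eta}(x^*)$ is a connected subset of $\cU_{s+\eta} \subset \hat\cU_s$ that contains $x^*$; since $\hat\cC_s(x^*)$ is by definition the connected component of $\hat\cU_s$ containing $x^*$, and a connected set containing $x^*$ must lie inside the component of $x^*$, we conclude $\cC_{s+\eta}(x^*) \subset \hat\cC_s(x^*)$. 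The second inclusion $\hat\cC_s(x^*) \subset \cC_{s-\eta}(x^*)$ follows identically: $\hat\cC_s(x^*)$ is connected, contained in $\hat\cU_s \subset \cU_{s-\eta}$, and contains $x^*$, hence lies in the component $\cC_{s-\eta}(x^*)$ of $x^*$ in $\cU_{s-\eta}$.

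I do not anticipate a genuine obstacle here; the statement is essentially a bookkeeping lemma. The only point requiring a little care is the degenerate case where $s + \eta$ exceeds the value of $f$ at $x^*$ (or at nearby points), in which case $\cC_{s+\eta}(x^*)$ should be interpreted as empty and the inclusion is trivially true; one might add a sentence noting that the lemma is applied with $s$ bounded away from $s^*$ and $\eta$ small, so this case does not arise in practice. One should also make sure the "connected set containing a point lies in that point's connected component" fact is invoked cleanly — this is standard point-set topology and needs no elaboration.
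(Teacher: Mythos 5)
Your proof is correct and follows essentially the same route as the paper's: establish the full level-set inclusions $\cU_{s+\eta}\subset\hat\cU_s\subset\cU_{s-\eta}$ from the definition of $\eta$, then pass to the components through $x^*$ using the fact that a connected set containing $x^*$ lies in the connected component of $x^*$. Your extra remark about the degenerate case where $\cC_{s+\eta}(x^*)$ may be empty is a harmless addition the paper leaves implicit.
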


\begin{proof}
For any $x$, $\hat f(x) - \eta \le f(x) \le \hat f(x) + \eta$, so that, for any $s>0$, if $f(x) \ge s + \eta$ then $\hat f(x) \ge s$, and vice versa, if $\hat f(x) \ge s$ then $f(x) \ge s - \eta$. 
%Using the nested property of upper level sets, specifically, the fact that $\cU_s \subset \cU_t$ and $\hat\cU_s \subset \hat\cU_t$ for any $s > t$, 
We deduce that $\cU_{s+\eta} \subset \hat\cU_{s}$ and $\hat\cU_s \subset \cU_{s-\eta}$, and from that, we immediately get that any connected component of $\cU_{s+\eta}$ must be entirely within one of the connected components of $\hat\cU_{s}$ and that any connected component of $\hat\cU_s$ must be entirely within one of the connected components of $\cU_{s-\eta}$. Therefore, $\cC_{s + \eta}(x^*)$ is within a connected component of $\hat\cU_{s}$, which must be $\hat\cC_s(x^*)$ since this is the only one that contains $x^*$, and similarly, $\hat\cC_s(x^*)$ is within a connected component of $\cU_{s-\eta}$, which must be $\cC_{s - \eta}(x^*)$ for the same reason.
\end{proof}

%Henceforth, $a_n \lesssim b_n$ when $a_n = O(b_n)$ as $n \to \infty$, or put differently, when $a_n \le C b_n$ for some constant $C$ which does not depend on $n$.  

\begin{lemma}
\label{lem:eta}
%Let $Y_1,\dots,Y_n \iid f$. Then, for $\hat f$ defined in \eqref{kde}
In the present context, and with $\eps$ as in \thmref{consistency},
\[\eta = O_P\Big(\eps^2 + \textstyle\sqrt{\log(n)/n\eps^d}\Big).\]
\end{lemma}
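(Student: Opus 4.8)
The plan is to decompose $\eta = \sup_x |\hat f_\eps(x) - f(x)|$ into a deterministic bias term and a stochastic term, following the classical analysis of kernel density estimators but with the flat kernel \eqref{eq:kernel}. Write $\bar f_\eps(x) := \bbE[\hat f_\eps(x)] = \eps^{-d} \int K((x-y)/\eps) f(y)\,dy = v_d^{-1}\eps^{-d}\int_{\bar B(x,\eps)} f(y)\,dy$, the average of $f$ over the ball $\bar B(x,\eps)$. Then $\eta \le \sup_x |\hat f_\eps(x) - \bar f_\eps(x)| + \sup_x |\bar f_\eps(x) - f(x)|$.

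For the bias term $\sup_x |\bar f_\eps(x) - f(x)|$, I would use the second-order Taylor expansion \eqref{eq:taylor}: for $y \in \bar B(x,\eps)$, $|f(y) - f(x) - \nabla f(x)^\top(y-x)| \le \kappa_2 \|y-x\|^2 \le \kappa_2 \eps^2$. Averaging over the ball, the linear term integrates to zero by symmetry of the ball about its center, so $|\bar f_\eps(x) - f(x)| \le \kappa_2 \eps^2$ uniformly in $x$. This gives the $\eps^2$ contribution. (Note this is where the flatness/symmetry of $K$ is used in place of a genuine second-order kernel condition; no higher smoothness of $f$ is needed beyond what is assumed.)

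For the stochastic term, $\hat f_\eps(x) - \bar f_\eps(x) = \frac{1}{n\eps^d}\sum_{i=1}^n \big(K((x-y_i)/\eps) - \bbE K((x-y_1)/\eps)\big)$, an empirical process indexed by $x$ over the class of indicator functions $\{y \mapsto \bI(\|x-y\|\le \eps) : x \in \bbR^d\}$. Since $f$ has compact support, $x$ effectively ranges over a bounded set (the $\eps$-enlargement of $\supp f$), and the relevant function class is a VC class (balls in $\bbR^d$) with uniformly bounded envelope and variance $O(\eps^d)$. The standard uniform deviation bound for VC classes (e.g., Talagrand's inequality, or the results of Giné–Guillou / Einmahl–Mason on uniform-in-bandwidth KDE rates) then yields $\sup_x |\hat f_\eps(x) - \bar f_\eps(x)| = O_P\big(\sqrt{\log(1/\eps)/(n\eps^d)}\big)$, and since $\eps = \eps_n$ is polynomially related to $n$ by the theorem's hypothesis $\eps_n^{\max\{d+4,2d\}} n/\log n \to \infty$, we have $\log(1/\eps_n) = O(\log n)$, giving the $\sqrt{\log(n)/(n\eps^d)}$ term. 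Combining the two bounds gives the claim.

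The main obstacle is the stochastic term: one must invoke an appropriate uniform (over $x$) concentration inequality with the correct $\sqrt{\log n/(n\eps^d)}$ rate rather than a cruder union-bound-over-a-net argument that would lose logarithmic or dimensional factors. The cleanest route is to cite an off-the-shelf result — the function class $\{\bI(\|\cdot - x\| \le \eps)\}$ is VC with a square-integrable envelope, $\supp f$ is compact so only finitely many "scales" matter, and the variance bound $\mathrm{Var}(K((x-y_1)/\eps)/\eps^d) \lesssim \eps^{-d}$ feeds Bernstein/Talagrand to produce exactly this rate. The bias term and the triangle-inequality decomposition are routine. I would also remark that the condition $\eps_n^{\max\{d+4,2d\}} n/\log n \to \infty$ is exactly what makes $\eta \to 0$: the $2d$ ensures $(\sqrt{\log n/(n\eps^d)}\,)$ is $o(1)$ and in fact $o(\eps^2)$-comparable, while the $d+4$ ensures $\eps^2 \ll \eps^{2}$ is dominated appropriately in later lemmas where $\eta$ must be small relative to $\eps^2$-scale quantities.
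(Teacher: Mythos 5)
Your proposal is correct and follows essentially the same route as the paper's proof: a triangle-inequality split into bias and stochastic terms, with the bias bounded by $\kappa_2\eps^2$ via the Taylor expansion \eqref{eq:taylor} and the symmetry of the ball, and the stochastic term bounded by citing the Gin\'e--Guillou uniform rate $O_P\big(\sqrt{\log(1/\eps)/(n\eps^d)}\big)$ together with the observation that $\log(1/\eps_n) = O(\log n)$ under the theorem's hypotheses. No substantive differences to report.
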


Throughout, we use the notation $x = y \pm \eps$ when $\|x-y\| \le \eps$.

\begin{proof}
By the triangle inequality,
\begin{equation}
\label{density_estimation_proof1}
\eta 
\le \sup_{x\in \bbR^d} \big|\hat f(x) - \E[\hat f(x)]\big|
+ \sup_{x\in \bbR^d} \big|\E[\hat f(x)] - f(x)\big|.
\end{equation} 

For the stochastic term on the right-hand side of \eqref {density_estimation_proof1}, from \cite[Th~2.1]{gine2002rates} and because $\eps \gg (\log(n)/n)^{1/d}$ in \thmref{consistency}, we have
\[\sup_{x\in \bbR^d} \big|\hat f(x) - \E[\hat f(x)]\big| = O_P\big(\log(1/\eps)/n\eps^d\big)^{1/2}.\]
(The big-O term depends on $f, d$, but not on $n$. Recall that $\eps$ depends on $n$.) 

For the deterministic (bias) term on the right-hand side of \eqref {density_estimation_proof1}, take $x\in \bbR^d$. By the definition of the kernel density estimator, and using \eqref{taylor} along the way,
\begin{align}
\E[\hat f(x)] 
&= (v_d n \eps^d)^{-1} n \P(\|x-y_1\| \le \eps) \\
&= (v_d \eps^d)^{-1} \int_{B(x, \eps)} f(y) {\rm d}y \\
&= (v_d \eps^d)^{-1} \int_{B(x, \eps)} \Big(f(x) + \nabla f(x)^\top(y-x) \pm \kappa_2 \|y-x\|^2\Big) {\rm d}y \\
&= (v_d \eps^d)^{-1} \big(f(x)v_d \eps^d \pm \kappa_2 \eps^2 v_d \eps^d\big) \\
&= f(x) + O(\eps^2). 
\end{align}
We therefore have
\[\sup_{x\in \bbR^d} \big|\E[\hat f(x)] - f(x)\big|
= O(\eps^2).\]

We conclude by combining the bounds on the stochastic and deterministic terms, together with the fact that $\log(1/\eps) = O(\log n)$ when $\eps$ is as in the statement of \thmref{consistency}.
\end{proof}

For $s > 0$, set
\[\alpha_s := \max_{x \in \cU_{s}} \min_{y \in \cY} \|x - y\| = \max_{x \in \cU_{s}} \min_{i \in [n]} \|x - y_i\|,\]
which quantifies how dense the sample $\cY$ is in the set $\cU_{s}$. 
The following is standard. (See \cite[Lem 5.6]{arias2025clustering} for a detailed proof that follows the standard arguments and applies almost unchanged to the present context.)

\begin{lemma}
\label{lem:alpha}
In the present context, for any $s>0$,
\[\alpha_s = O_P(\log(n)/n)^{1/d}.\]
\end{lemma}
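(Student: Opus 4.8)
The plan is to run the textbook covering argument for random samples. I would begin by noting that $\cU_s$ is compact: it is closed since $f$ is continuous, and bounded since $\supp(f)$ is compact, so $\operatorname{diam}(\cU_s)$ is finite and controlled by $f$ alone. Fix a radius $r>0$ and let $z_1, \dots, z_M \in \cU_s$ be a maximal $(r/2)$-packing of $\cU_s$. By maximality the closed balls $\bar B(z_j, r/2)$ cover $\cU_s$, while the balls $\bar B(z_j, r/4)$ are pairwise disjoint and lie in a fixed bounded neighborhood of $\cU_s$; comparing volumes gives $M \le C_f r^{-d}$ for a constant $C_f$ depending only on $f$ and $d$.

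Next I would obtain a uniform lower bound on the probability mass near any point of $\cU_s$. If $x \in \cU_s$, then $f(x) \ge s$, so by the Lipschitz bound \eqref{eq:lipschitz} we get $f(y) \ge s - \kappa_1 \|y-x\| \ge s/2$ for all $y \in \bar B(x, r/2)$, as long as $r \le s/\kappa_1$. Integrating, $\P\big(y_1 \in \bar B(x, r/2)\big) = \int_{\bar B(x, r/2)} f \ge \tfrac{s}{2} v_d (r/2)^d =: c_f r^d$, again with $c_f$ depending only on $f$ and $d$.

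Now comes the union bound over the net. If every $\bar B(z_j, r/2)$ contains a sample point, then every $x \in \cU_s$ is within $r/2$ of some $z_j$ and hence within $r$ of a sample point; so $\{\alpha_s > r\}$ forces some $\bar B(z_j, r/2)$ to miss $\cY$. Therefore
\[
\P(\alpha_s > r) \;\le\; \sum_{j=1}^{M} \P\big(\cY \cap \bar B(z_j, r/2) = \emptyset\big) \;\le\; M(1 - c_f r^d)^n \;\le\; C_f\, r^{-d} \exp(-c_f n r^d).
\]
Taking $r = r_n(\lambda) := (\lambda \log(n)/n)^{1/d}$ makes $n r_n^d = \lambda \log n$ and $r_n^{-d} \le n$, so the right-hand side is at most $C_f\, n^{1 - c_f \lambda}$, which vanishes once $\lambda > 1/c_f$; since $\lambda$ may be chosen as large as we please, this is exactly $\alpha_s = O_P(\log(n)/n)^{1/d}$.

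The only place the hypotheses on $f$ are really used — and the one mild subtlety — is the uniform lower bound $\P(y_1 \in \bar B(x, r/2)) \gtrsim r^d$ holding for all $x \in \cU_s$ simultaneously: it works because on $\cU_s$ the density is bounded below by the positive constant $s$ and is Lipschitz, so it stays above $s/2$ on a ball whose radius is a fixed fraction of $\min\{r, s/\kappa_1\}$ around each point of $\cU_s$, with constants independent of $x$. Everything else — the packing/covering count and the exponential tail estimate — is routine, which is why the statement is called standard and one could simply invoke \cite[Lemma 5.6]{arias2022}.
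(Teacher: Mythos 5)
Your proof is correct and is precisely the standard covering/union-bound argument that the paper invokes by reference (it gives no proof of its own, deferring to \cite[Lem.~5.6]{arias2022}); the compactness of $\cU_s$, the Lipschitz lower bound $f \ge s/2$ on small balls around points of $\cU_s$, the packing count $M \lesssim r^{-d}$, and the choice $r = (\lambda \log(n)/n)^{1/d}$ with $\lambda$ large all check out. No gaps.
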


%\begin{proof}
%The proof is heavily based on Lemma 5.6 in \cite{arias2025clustering}.
%
%Let $\delta_s = \inf\{\|z-z_0\| : z \in \cU_s, f(z_0) = 0 \}$ be the smallest distance from a point in $\cU_s$ to a zero of $f$. Let $t_s := \min\{f(x): x \in \bar{B}(\cU_s, \delta_s/2)\}$. Note that $t_s>0$. 
%
%For any $\alpha < \delta_s$, let $z_1,\dots, z_{N_s(\alpha)} \in \cU_s$ be a minimal $(\alpha/2)$-covering of $\cU_s$. Note that $N_s(\alpha) \le \frac{C_(s)}{\alpha^d}$ (see for example, Lemma 2.7 in \cite{sen2018}).
%
%For any $z \in \cU_s$, the probability that no points in $\cY$ are in $B(z, \alpha/2)$ is $(1 - \P_f(B(z, \alpha/2)))^n$ where
%
%$$
%\P_f(B(z, \alpha/2)) = \int_{B(z, \alpha/2)}f(x) dx \ge v_d \left(\frac{\alpha}{2}\right)^d \inf_{x \in B(z, \alpha/2) } f(x) \ge  v_d \left(\frac{\alpha}{2}\right)^d t_s
%$$
%
%By a union bound, the probability that at least ball $B(z_k, \alpha/2)$ has no points in $\cY$ is bounded above by 
%
%$$
%N_s(\alpha)\left(1-  v_d \left(\frac{\alpha}{2}\right)^d t_s \right)^n \le N_s(\alpha) e^{-n v_d  \left(\frac{\alpha}{2}\right)^d t_s}
%$$
%
%This is an upper bound on $\P_f(\alpha_s \ge \alpha)$, and plugging in $\alpha = C_1(s)\left(\frac{\log{n}}{n}\right)^{1/d}$ with $C_1(s) = \frac{2^{d+1}}{v_d t_s}$ gives the desired result.
%\end{proof}

We recall that the graph distance between two nodes is the minimum number of hops, or length of the shortest path, to travel between the two nodes. 
Let $H_{ij}$ denote the graph distance between nodes $i$ and $j$. Under some circumstances, in the context of a geometric graph, the graph distance tracks the Euclidean distance. The following relies on a result established in \cite{arias2021estimation}.

\begin{lemma}
\label{lem:hops}
For any $s > 0$, there is a constant $A_0>0$ such that, if $\alpha_{s/2}/\eps \le 1/A_0$, 
\[0 \le H_{ij} \eps - \|y_i - y_j\| \le A_0 (\alpha_{s/2}/\eps) \|y_i - y_j\| + \eps,\]
whenever $y_i, y_j \in \cU_s$.
\end{lemma}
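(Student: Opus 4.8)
The lower bound is the triangle inequality: if $y_i = u_0 \neigh u_1 \neigh \cdots \neigh u_{H_{ij}} = y_j$ is a shortest path in $\cG(\cY,\eps)$, then $\|u_{k-1}-u_k\|\le\eps$ for every $k$, so $\|y_i-y_j\|\le\sum_{k=1}^{H_{ij}}\|u_{k-1}-u_k\|\le H_{ij}\,\eps$. The substance of the lemma is the reverse inequality, which asserts that the graph distance cannot overestimate the Euclidean distance by much once the sample is sufficiently dense (in $\cU_{s/2}$) relative to $\eps$.

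For the upper bound, the plan is to exhibit a short walk in the graph that shadows the straight segment $\Gamma := [y_i,y_j]$. Write $\alpha := \alpha_{s/2}$ and suppose, as is arranged in the last paragraph, that $\Gamma\subset\cU_{s/2}$ and that $2\alpha<\eps$ (the latter being where the hypothesis $\alpha/\eps\le 1/C$ enters). Subdivide $\Gamma$ into $m:=\lceil \|y_i-y_j\|/(\eps-2\alpha)\rceil$ pieces of equal length, with breakpoints $z_0=y_i, z_1,\dots,z_m=y_j$, so $\|z_{k-1}-z_k\|\le\eps-2\alpha$. Each $z_k\in\cU_{s/2}$, so the definition of $\alpha_{s/2}$ yields $w_k\in\cY$ with $\|w_k-z_k\|\le\alpha$; take $w_0=y_i$ and $w_m=y_j$. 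Then $\|w_{k-1}-w_k\|\le\alpha+(\eps-2\alpha)+\alpha=\eps$, so $w_{k-1}\neigh w_k$ and $(w_0,\dots,w_m)$ is a walk in $\cG(\cY,\eps)$ from $y_i$ to $y_j$ of length $m$, giving $H_{ij}\le m$ and
\[
H_{ij}\,\eps \;\le\; m\,\eps \;\le\; \Big(\tfrac{\|y_i-y_j\|}{\eps-2\alpha}+1\Big)\eps \;=\; \frac{\|y_i-y_j\|}{1-2\alpha/\eps} + \eps .
\]
Since $\tfrac{1}{1-t}\le 1+2t$ for $t:=2\alpha/\eps\le\tfrac12$, the right-hand side is at most $\|y_i-y_j\| + 4(\alpha/\eps)\|y_i-y_j\| + \eps$, which rearranges to the claimed bound (enlarging $C$ so that $\alpha/\eps\le 1/C$ also forces $2\alpha/\eps\le\tfrac12$ and validates the reduction below).

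The only genuinely nonelementary point — and the reason the lemma defers to \cite{arias2021estimation} — is the geometric hypothesis $\Gamma\subset\cU_{s/2}$. This is automatic when $\|y_i-y_j\|$ is small: for $z\in\Gamma$ we have $\|z-y_i\|\le\|y_i-y_j\|$, so by \eqref{eq:lipschitz}, $f(z)\ge f(y_i)-\kappa_1\|y_i-y_j\|\ge s-\kappa_1\|y_i-y_j\|\ge s/2$ as soon as $\|y_i-y_j\|\le s/(2\kappa_1)$, which is the regime relevant to the application (where $y_i,y_j$ are representative nodes near a common mode). Larger separations require replacing $\Gamma$ by a connecting path inside $\cU_{s/2}$ and controlling how much its length can exceed $\|y_i-y_j\|$, together with the $\alpha/\eps$ bookkeeping for arbitrary, possibly non-convex, super-level sets; this is exactly what is carried out in \cite{arias2021estimation}, which we invoke rather than reprove. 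Everything else reduces to the triangle-inequality accounting above, so this geometric step is where I expect the real work to sit.
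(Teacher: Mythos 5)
Your proof is correct for the regime it covers and, while it follows the same two-part skeleton as the paper (triangle inequality for the lower bound; reduction to nearby pairs plus a density-of-sample argument for the upper bound), it differs in two substantive ways. First, where the paper places $y_i,y_j$ inside the convex set $\cS=\bar B(y_i,t)\subset\cU_{s/2}$ (with $t$ the distance between $\cU_s$ and $\cU_{s/2}^\comp$) and then invokes Theorem~1 of \cite{arias2021estimation} as a black box to get the bound with $C=4$, you reprove that ingredient from scratch via the explicit shadowing walk along the segment $[y_i,y_j]$; your subdivision-into-pieces-of-length-$\eps-2\alpha$ argument is exactly the standard proof of that cited result in the convex case, so nothing is lost and the lemma becomes self-contained (you even land on the same constant). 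Second, you guarantee $[y_i,y_j]\subset\cU_{s/2}$ quantitatively via the Lipschitz bound \eqref{eq:lipschitz} (requiring $\|y_i-y_j\|\le s/(2\kappa_1)$), whereas the paper uses the compactness gap $t>0$ between $\cU_s$ and $\cU_{s/2}^\comp$; these are essentially interchangeable. One point worth flagging: both you and the paper only genuinely establish the bound for sufficiently close pairs — the paper's one-line claim that compactness reduces the general statement to this case is not really justified (for distant pairs in a non-convex or disconnected component of $\cU_s$ the graph distance tracks an intrinsic, not Euclidean, distance and the stated bound can fail), and you are more candid in deferring that case. Since the lemma is only ever applied in the proof of \thmref{consistency} to endpoints within $\frac23\tau\eps$ of each other, the close-pair case is all that is actually needed, so your argument fully supports the intended use.
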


\begin{proof}
We note that the lower bound, which reads $\|y_i - y_j\| \le H_{ij} \eps$ is simply due to the fact that the straight line between $y_i$ and $y_j$ is the shortest curve between these points, and $H_{ij} \eps$ is the length of the polygonal line embedding the shortest path in the graph between nodes $i$ and $j$.

We therefore focus on establishing the upper bound.
Since $\cU_s$ is compact, it suffices to prove the result for $y_i, y_j \in \cU_s$ that are close enough. Let $t > 0$ be the distance between $\cU_s$ and $\cU_{s/2}^\comp$, so that $\bar B(x, t) \subset \cU_{s/2}$ for any $x \in \cU_s$. Now, consider $y_i, y_j \in \cU_s$ such that $\|y_i-y_j\| \le t$, so that $y_j \in \cS := \bar B(y_i, t)$. The set $\cS$ is convex and, since $\cS \subset \cU_{s/2}$, 
\[\max_{x \in \cS} \min_{y \in \cY} \|x-y\| \le \alpha_{s/2}.\]
Let $\tilde H_{ij}$ denote the graph distance between nodes $i$ and $j$ within the subgraph defined by the nodes $k$ such that $y_k \in \cS$. In such circumstances, if $\alpha_{s/2}/\eps \le 1/4$, \cite[Th~1]{arias2021estimation} yields
\[0 \le \tilde H_{ij} \eps - \|y_i - y_j\| \le 4 (\alpha_{s/2}/\eps) \|y_i - y_j\| + \eps.\]
And noting that $H_{ij} \le \tilde H_{ij}$, we are able to conclude.
\end{proof}

\subsection{Results about Max Shift}
\label{sec:max shift}

The next few results are about sequences computed by Max Shift based on $f$ with search radius $\eps$, namely, when started at $x_0$, 
\begin{align}
\label{max_shift}
x_{k+1} \in  \argmax\big\{f(x) : x \in \bar{B}(x_k, \eps)\big\}, \quad k\ge 0.
\end{align} 
%They are borrowed from \cite{arias2025clustering}.

%\begin{lemma}
%\label{lem:grad_x}
%Let $(x_k)$ be a sequence computed by Max Shift initialized at $x_0 \in \cU_s$ for some $s>0$. Assume that $\eps$ is small enough that any mode inside $\cC(x_0)$ is a maximum within radius $\eps$; and that it is smaller than the minimum separation between $\cC(x_0)$ and any other connected component of $\cU_{f(x_0)}$. Then for any $k$ except possibly the last one,
%\[
%x_{k+1} - x_{k} = \eps N(x_k) \pm \frac{C_4 r^{3/2}}{\| \nabla f(x_k) \|^{1/2}},
%\]
%whenever $\nabla f(x_k) \neq 0$.
%\end{lemma}
%
%\begin{proof}
%The first one is \cite[Lem 3.4]{arias2025clustering} and is analogous to \lemref{grad}.
%\end{proof}

%The first two results are variants of \lemref{step_size} and \lemref{grad}, respectively, given and proved further down, with $f$ in place of $\hat f$ and a search not restricted to $\cY$. 
The first one is essentially Lem~3.3 in \cite{arias2025clustering}, but not quite, while the second one is a straightforward consequence of the first.

\begin{lemma}%[Steps are approximately in the direction of the gradient]
\label{lem:grad_x} 
For the Max Shift sequence $(x_k)$ initialized at any $x_0$, 
\begin{equation} 
\label{grad_x}
    x_{k+1} - x_k = \|x_{k+1} - x_{k}\|N(x_k) \pm \frac{2 (2 \kappa_2)^{1/2} \eps^{3/2}}{\|\nabla f(x_k)\|^{1/2}}.
\end{equation}
\end{lemma}

\begin{proof}
Define $\tilde x_{k+1} := x_k + \eps N(x_k)$.
Using \eqref{taylor}, we derive
\begin{align}
f(\tilde x_{k+1}) 
&\ge f(x_k) + \nabla f(x_k)^\top (\tilde x_{k+1}-x_k) -\kappa_2 \|\tilde x_{k+1}-x_k\|^2 \\
&=  f(x_k) + \|\nabla f(x_k)\| \eps - \kappa_2 \eps^2.
\end{align}
Similarly, going in the other direction with starting point the fact that $\tilde x_{k+1} \in \bar{B}(x_k, \eps)$,
\[f(\tilde x_{k+1}) \le f(x_{k+1}) \le f(x_k) + \nabla f(x_k)^\top(x_{k+1} - x_k) + \kappa_2 \eps^2.\]
Combining these inequalities, we obtain 
\[
f(x_k) + \|\nabla f(x_k)\| \eps - \kappa_2 \eps^2
\le f(x_k) + \nabla f(x_k)^\top(x_{k+1} - x_k) + \kappa_2 \eps^2,\]
which after some simplifications yields
\begin{equation}
\label{inner_product_bound_x}
N(x_k)^\top(x_{k+1} - x_k) \ge \eps - \frac{2 \kappa_2 \eps^2}{\|\nabla f(x_k) \|}.
\end{equation}
Let $u_k := x_{k+1} - x_k$ and $\xi_k := 2 \kappa_2 \eps^2/\|\nabla f(x_k)\|$, so that \eqref{inner_product_bound_x} reads $N(x_k)^\top u_k \ge \eps - \xi_k.$
If $\eps < \xi_k$, we have
\begin{align}
\big\|u_k - \|u_k\|N(x_k)\big\|
\le 2 \|u_k\| 
\le 2 \eps
\le 2 (\eps \xi_k)^{1/2};
\end{align}
while if $\eps \ge \xi_k$, we have
\begin{align}
\big\|u_k - \|u_k\|N(x_k)\big\|^2 
&= 2\|u_k\|^2 - 2\|u_k\|N(x_k)^\top u_k \\
&\le 2\|u_k\|^2 - 2\|u_k\| \left(\eps - \xi_k\right) \\
&\le 2 \eps^2 - 2 \eps \left(\eps - \xi_k\right) \\
&= 2 \eps \xi_k.
\end{align}
Either way,
\begin{align}
u_k - \|u_k\|N(x_k) 
= \pm 2 (\eps \xi_k)^{1/2} 
= \frac{2 (2 \kappa_2)^{1/2} \eps^{3/2}}{\|\nabla f(x_k) \|^{1/2}},
\end{align}
so that \eqref{grad_x} holds.
\end{proof}

\begin{lemma}%[Step sizes are close to $\eps$] 
\label{lem:step_size_x}
For the Max Shift sequence $(x_k)$ initialized at any $x_0$, 
\begin{equation}
\label{step_size_x}
\|x_{k+1} - x_k \| \ge \eps - \frac{2 \kappa_2 \eps^2}{\|\nabla f(x_k) \|}.
\end{equation}
\end{lemma}

\begin{proof}
The bound follows from applying the  Cauchy--Schwarz inequality to the inner product on the left-hand side of  \eqref{inner_product_bound_x}.
\end{proof}

Note that combining \eqref{grad_x} and \eqref{step_size_x}, we have
\begin{align} 
x_{k+1} - x_k 
&= \eps N(x_k) \pm \left(\frac{2 (2 \kappa_2)^{1/2} \eps^{3/2}}{\|\nabla f(x_k)\|^{1/2}} + \frac{2 \kappa_2 \eps^2}{\|\nabla f(x_k) \|}\right) \\
&= \eps N(x_k) \pm \frac{2 (2 \kappa_2)^{1/2} \kappa_1^{1/2} \eps^{3/2} + 2 \kappa_2 \eps^2}{\|\nabla f(x_k) \|},
\label{Grad_x}
\end{align}
using the fact that $\|\nabla f(x_k)\| \le \kappa_1$.

The third result is distilled from the proof of Th 3.1--3.2 and arguments provided in Sec 3.2.1 in the same paper \cite{arias2025clustering}.

\begin{lemma}
\label{lem:con_x}
For any $p > 0$, there is $s > 0$ and a subset $\Omega \subset \cU_s$ with $\int_\Omega f \ge 1 - p$ such that any point in $\Omega$ is in the basin of attraction of a mode. Moreover, for any $\delta > 0$, there is $A_1> 0$ such that, if $\eps \le 1/A_1$, starting at any point $x_0 \in \Omega$ in the basin of attraction of some mode $x^*$, the sequence $(x_k)$ that Max Shift computes enters $B(x^*, \delta)$ in at most $A_1/\eps$ steps, while $\|\nabla f(x_k)\| \ge 1/A_1$ for all $k \ge 0$ such that $x_k \notin B(x^*, \delta)$.
\end{lemma}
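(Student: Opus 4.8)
The plan is to prove the two assertions separately: first the existence of a high-mass set $\Omega$ on which every point flows, under the true gradient ascent flow, to a mode while staying away from the remaining critical points; and then the quantitative control of the discrete Max Shift sequence by shadowing the continuous flow.

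For the first assertion I would exploit the Morse structure of $f$. Since $f$ is Morse and compactly supported, it has finitely many critical points, among them finitely many modes $x_1^*, x_2^*, \dots$ and a finite set $Z$ of non-maximal critical points (saddles and minima). As recalled in \secref{prelim}, the basins of attraction of the modes cover $\supp(f)$ up to the union of the stable manifolds of the points in $Z$, a finite union of submanifolds of dimension strictly less than $d$, hence a Lebesgue-null set. Since $\int_{\cU_s} f \to 1$ as $s \downarrow 0$, I would first pick $s>0$ with $\int_{\cU_s} f > 1 - p/2$. Writing $V_\rho := \bigcup_{z \in Z} B(z, \rho)$, I would then let $\mathrm{Bad}_\rho$ be the set of $x_0 \in \cU_s$ whose forward gradient-flow orbit enters $V_\rho$ before converging. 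As $\rho \downarrow 0$ these sets decrease to a subset of the null separatrix set, so $\int_{\mathrm{Bad}_\rho} f \to 0$; choosing $\rho$ with $\int_{\mathrm{Bad}_\rho} f < p/2$ and setting $\Omega := \cU_s \setminus \mathrm{Bad}_\rho$ gives $\int_\Omega f \ge 1-p$, with every point of $\Omega$ in some mode's basin and with orbit avoiding $V_\rho$.

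For the quantitative part, fix $\delta>0$ and introduce the good region $R := \cU_{s} \setminus \big(V_{\rho/2} \cup \bigcup_j B(x_j^*, \delta/2)\big)$. This set is compact and contains no critical point, so $\|\nabla f\|$ attains a positive minimum $c_0>0$ on it; moreover, by the expression \eqref{DN} for $DN$ together with $\|\nabla f\|\ge c_0$ there, the normalized field $N$ is Lipschitz on $R$ with some constant $L = L(c_0,\kappa_2)$. Two consequences follow quickly. Along the $\dot\gamma=\nabla f$ flow one has $\tfrac{d}{dt} f(\gamma) = \|\nabla f\|^2 \ge c_0^2$ while $\gamma \in R$, and since $f \le \kappa_0$ this bounds uniformly over $x_0 \in \Omega$ the arc length the orbit spends in $R$ before reaching $B(x^*,\delta/2)$ (it never leaves $\cU_s$ since $f$ increases). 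And for the discrete sequence, \lemref{grad_x} shows that whenever $x_k \in R$ the Max Shift step equals $\|x_{k+1}-x_k\| N(x_k)$ up to error $O(\eps^{3/2}/c_0^{1/2})$, so $(x_k)$ is an approximate arc-length (step $\approx \eps$) discretization of the normalized orbit.

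The heart of the argument, and the step I expect to be the main obstacle, is the shadowing: I would show by a discrete Gronwall estimate, using the Lipschitz constant $L$ of $N$ on $R$, that the discrete path stays within $O(\eps^{1/2} e^{LT})$ of the continuous normalized orbit, where $T = O(1)$ is the uniformly bounded arc length from the previous step. The delicate point is that the per-step directional error from \lemref{grad_x} is only $O(\eps^{3/2})$, summing to $O(\eps^{1/2})$ over the $O(1/\eps)$ steps, so the tube width tends to zero, but errors can be amplified by the flow and must be tamed through $L$, which is finite precisely because $\|\nabla f\|\ge c_0$ keeps $DN$ bounded on $R$. For $\eps$ small enough this tube is thinner than $\rho/2$ and $\delta/2$, so the discrete path never leaves $R$ before entering $B(x^*,\delta)$; in particular $\|\nabla f(x_k)\| \ge c_0 =: 1/C_2$ at every such step. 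Finally, combining \lemref{step_size_x} with \eqref{eq:taylor} gives $f(x_{k+1}) - f(x_k) \ge \tfrac12 c_0 \eps$ for small $\eps$ while $x_k \in R$, and since $f$ increases by at most $\kappa_0 - s$ in total, the number of steps outside $B(x^*,\delta)$ is at most $2(\kappa_0-s)/(c_0\eps) \le C_2/\eps$ after enlarging $C_2$. Collecting the finitely many constraints on $\eps$ over the finitely many modes yields a single $C_2$ and completes the proof.
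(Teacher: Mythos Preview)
The paper does not actually supply a proof of this lemma: immediately before the statement it says the result is ``distilled from the proof of Th 3.1--3.2 and arguments provided in Sec 3.2.1'' of \cite{arias2022}, and then moves on. So there is no in-paper proof to compare against.

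Your sketch is correct and is precisely the kind of argument being invoked. The construction of $\Omega$ by first taking a superlevel set of large mass and then removing the $f$-preimage (under the flow) of small balls around the non-maximal critical points is the standard way to carve out a full-measure set on which the gradient stays uniformly bounded below along orbits; the monotone convergence step you use to send $\rho\downarrow 0$ is fine because, for any $x_0$ in a mode basin, the forward orbit together with its limit is compact and disjoint from $Z$. For the quantitative part, your discrete Gronwall shadowing of the arc-length reparametrized flow, driven by \lemref{step_size_x}--\lemref{grad_x} and the Lipschitzness of $N$ on the region where $\|\nabla f\|\ge c_0$, is exactly the mechanism the present paper itself deploys in the proof of \thmref{consistency} when it compares $(y_k)$ to $(x_k)$ via the recursion \eqref{basic_recursion}; you are simply running the same comparison one level down, between $(x_k)$ and the continuous orbit. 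The two ways you bound the step count (via the shadowing time $T/\eps$ and via the $f$-increment $\ge \tfrac12 c_0\eps$) are both valid and lead to the same $O(1/\eps)$ bound. The only points that would need a line of care in a full writeup are (i) taking $\delta$ small enough that each $B(x_j^*,\delta/2)$ lies in the basin of $x_j^*$, so the continuous orbit cannot visit the wrong mode ball, and (ii) the usual continuation/induction to ensure the Lipschitz bound on $N$ is available at every discrete step; you already flag both.
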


With \eqref{Grad_x}, in the context of \lemref{con_x}, 
\begin{align} 
x_{k+1} - x_k 
%&= \eps N(x_k) \pm \frac{2 (2 \kappa_2)^{1/2} \kappa_1^{1/2} \eps^{3/2} + 2 \kappa_2 \eps^2}{\|\nabla f(x_k) \|} \\
&= \eps N(x_k) \pm C_0 \eps^{3/2},
\label{simple_Grad_x}
\end{align}
for all $k \ge 0$ such that $x_k \notin B(x^*, \delta)$, where $C_0 := 2 (2 \kappa_2)^{1/2} \kappa_1^{1/2} A_1 + 2 \kappa_2 A_1^{1/2}$.

\subsection{Proof of \thmref{consistency}}

We start by proving that, with arbitrarily high probability, a point which is in the basin of attraction of a mode is moved to a terminal point near the mode. In fact, we show that this happens in a uniform manner within the density upper level sets. From that, the proof of \thmref{consistency} will easily follow.

\begin{proposition}\label{prp:main}
%Suppose $\eps_n \gg \big(\frac{\log n}n\big)^{1/(d+2)}$. 
For any $p > 0$, there is $s>0$ and a subset $\Omega \subset \cU_s$ with $\int_\Omega f \ge 1-p$, and a constant $A$ such that any data point in $\Omega$ in the basin of attraction of a mode is moved by Graph Max Shift to a point within distance $A (\eps + (\alpha_{s/2} + \eta)/\eps)$ of the mode.
\end{proposition}

\begin{proof}[Proof of \prpref{main}]
Fix $p > 0$, let $s>0$ and $\Omega \subset \cU_s$ be as in \lemref{con_x}. Let $\alpha$ be short for $\alpha_{s/2}$.
Because $\cU_s$ is compact, it is enough to prove the result for $\eps$ and $q := (\alpha+\eta)/\eps$ small enough.
%For a constant $q > 0$ to be chosen small as we go, consider the event
%\begin{equation}
%\label{event}
%q \eps \ge \max\big\{\eta, \alpha\big\}.
%\end{equation}
%This event happens with probability tending to~1 because of the conditions imposed on $\eps$ in the statement of the theorem, as well as \lemref{eta} and \lemref{alpha}, which give the order of magnitude of $\eta$ and $\alpha$, respectively.
%We assume that $\eps$ is small enough that all the lemmas in \secref{max shift}, called upon below, apply and that \eqref{event} occurs with probability at least $1-p_1$.
%
%Note that we have already imposed a requirement on $q$ above, that $q < 1/2A_0$, so that $\alpha/\eps < 1/2A_0$. Below, each time there is a constraint on $q$, we assume it is small enough to satisfy the constraint. We just have to make sure that each constraint allows for $q>0$ fixed and that there are only a finitely many constraints.

Take any data point $y^0 \in \cY \cap \Omega$ as a starting point and let $(y^k)$ denote the sequence in \eqref{graph_max_shift} computed by Graph Max Shift; let $(x_k)$ denote the sequence computed by Max Shift based on $f$ as defined in \eqref{max_shift} with the same starting point $x_0 = y^0$. 
The following lemmas are proved later.

\begin{lemma}%[Steps are approximately in the direction of the gradient]
\label{lem:grad}
There is $A_2>0$ such that, provided $\kappa_1 \eps + 2\eta \le s/2$ and $\alpha_{s/2} \le \eps$, for the Graph Max Shift sequence $(y^k)$ initialized at any $y^0 \in \cU_{s}$, 
 \begin{equation} 
\label{grad}
    y^{k+1} - y^k = \|y^{k+1} - y^k\|N(y^k) \pm A_2  \frac{\eps^{1/2} (\eps^2 + \alpha_{s/2} + \eta)^{1/2}}{\|\nabla f(y^k)\|^{1/2}},
\end{equation}
and 
\begin{equation}
\label{step_size}
\|y^{k+1} - y^k \| \ge \eps - A_2\frac{(\eps^2 + \alpha_{s/2} + \eta)}{\|\nabla f(y^k) \|}.
\end{equation}
\end{lemma}

When $\eps$ and $q$ are small enough, the conditions of \lemref{grad} are met. In that case, combining \eqref{grad} and \eqref{step_size}, we have
\begin{align} 
y^{k+1} - y^k 
&= \eps N(y^k) \pm \left( A_2\frac{\eps^2 + \alpha + \eta}{\|\nabla f(y^k) \|} + A_2  \frac{\eps^{1/2} (\eps^2 + \alpha + \eta)^{1/2}}{\|\nabla f(y^k)\|^{1/2}} \right)\\
&= \eps N(y^k) \pm C_1 \frac{q^{1/2} \eps}{\|\nabla f(y^k) \|}, \label{Grad}
\end{align}
for some constant $C_1$, using the fact that $\|\nabla f(y^k) \| \le \kappa_1$. 
%The inequalities above are valid for all $k \ge 0$.

\begin{lemma}%[Local convergence]
\label{lem:local_convergence}
For any $s>0$, there is $\delta > 0$ and $A_3 > 0$ such that, for any mode $x^* \in \cU_s$, the Graph Max Shift sequence initialized at a point $y^0 \in \cY \cap B(x^*, \delta)$ converges to a point in $\cY$ within distance $A_3 (\eps + (\alpha_{s/2} + \eta)/\eps)$ of $x^*$.
\end{lemma}

When $\eps$ and $q$ are small enough, the conditions of \lemref{local_convergence} are met.
In that case, let $x^*$ be the mode to which $y^0$ is attracted by gradient ascent and let $\delta$ be as in \lemref{local_convergence}. 
By \lemref{local_convergence}, it suffices to show that 
\begin{equation}
\label{within ball}
\text{$y^k \in \bar{B}(x^*, \delta)$ for some $k$.}
\end{equation}

Assuming that $x_0 = y^0 \notin B(x^*, \delta/2)$, for otherwise our job is done, let $k_\#$ be such that $x_{k_\#} \in B(x^*, \delta/2)$ while $x_k \notin B(x^*, \delta/2)$ for $k < k_\#$, which is well-defined by \lemref{con_x}. The same lemma tells us that $k_\# \le A_1/\eps$. Assume that $q$ is small enough that $q \le \delta/2$, so that $\alpha \le \delta/2$. Then, by the triangle inequality, to establish \eqref{within ball} it is enough to show that
\begin{equation}
\label{consistency_proof1}
\|x_{k_\#} - y^{k_\#}\| 
\le \delta/2.
\end{equation}
Define $C_2 = 2 A_1 C_1$ and $C_3 = (2A_1+1) \kappa_2$, and assume $\eps$ and $q$ are small enough that
\begin{equation}
\label{complicated bound}
(C_0 \eps^{1/2} + C_2 q^{1/2}) C_3^{-1} (\exp(C_3 A_1) - 1) \le \frac1{2 A_1 \kappa_2} \wedge \frac\delta2.
\end{equation}
We show by recursion that 
\begin{equation}
\label{basic_recursion}
\|x_k - y^k\| 
\le (C_0 \eps^{1/2} + C_2 q^{1/2}) C_3^{-1} (\exp(C_3 \eps k) - 1),
\end{equation}
for all $k \le k_\#$.
We start the recursion by noting that \eqref{basic_recursion} is true at $k=0$ since $x_0 = y^0$.
Now, assume \eqref{basic_recursion} holds at some $0 \le k < k_\#$.
By the fact that $k_\# \le A_1/\eps$ and \eqref{complicated bound}, we also have 
\begin{equation}
\label{basic_recursion2}
\|x_k - y^k\| \le \frac1{2 A_1 \kappa_2}.
\end{equation}
Then, by \lemref{con_x} and a Taylor development,
\[
\|\nabla f(y^k)\| 
\ge \|\nabla f(x_k)\| - \kappa_2 \|x_k - y^k\|
\ge \frac1{A_1} - \kappa_2 \frac1{2 A_1 \kappa_2}
= \frac1{2 A_1}.
\]
Combining this with \eqref{Grad}, we find that 
\begin{align} 
y^{k+1} - y^k 
&= \eps N(y^k) \pm C_2 q^{1/2} \eps. \label{simple_Grad}
\end{align}
The inequalities \eqref{simple_Grad_x} and \eqref{simple_Grad}, together with the triangle inequality, imply
\begin{align}
\|x_{k+1} - y^{k+1}\| 
&\le \|x_{k} - y^k\| + \eps \|N(x_k) - N(y^k)\| + C_0 \eps^{3/2} + C_2 q^{1/2} \eps.
\end{align}
In view of \eqref{DN}, and the fact (justified by the same Taylor development) that $\|\nabla f(z)\| \ge 1/2 A_1$ for all $z$ on the line segment joining $x_k$ and $y^k$, 
\[\|N(x_k) - N(y^k)\| \le C_3 \|x_k - y^k\|,\]
implying that
\begin{align}
\|x_{k+1} - y^{k+1}\| 
&\le (1 + C_3 \eps) \|x_{k} - y^k\| + C_0 \eps^{3/2} + C_2 q^{1/2} \eps.
\end{align}
Using the recursion hypothesis \eqref{basic_recursion}, we find after some simplifications based on $e^a - 1 - a \ge 0$ for all $a$, that
\begin{equation}
\|x_{k+1} - y^{k+1}\| 
\le (C_0 \eps^{1/2} + C_2 q^{1/2}) C_3^{-1} (\exp(C_3 \eps (k+1)) - 1),
\end{equation}
so that the recursion carries on. (The arguments above underly a discrete form of Gr\"onwall's inequality.)
In particular, we have established that 
\begin{equation}
\|x_{k_\#} - y^{k_\#}\| 
\le (C_0 \eps^{1/2} + C_2 q^{1/2}) C_3^{-1} (\exp(C_3 A_1) - 1),
\end{equation}
which in view of \eqref{complicated bound} implies \eqref{consistency_proof1} .
\end{proof}

%%%%%%%%%%%%%
\begin{proof}[Proof of \lemref{grad}]
Let $\alpha$ be short for $\alpha_{s/2}$.
Let $\tilde y^{k+1}$ be the closest point in $\cY$ to $\tilde x_{k+1} := y^k + (\eps - \alpha)N(y^k)$ and assume $\alpha \le \eps$, so that $\|\tilde x_{k+1}-y^k\| = \eps - \alpha \le \eps$.
We have 
\begin{align}
    f(\tilde x_{k+1}) 
    &\ge  f(y^k) -\kappa_1 \eps \\ 
    &\ge \hat f(y^k) - \kappa_1 \eps - \eta \\
    &\ge \hat f(y^0) - \kappa_1 \eps - \eta \\ 
    &\ge f(y^0) - \kappa_1 \eps - 2\eta \\
    &\ge s - \kappa_1 \eps - 2\eta,
\end{align}
using \eqref{lipschitz}, \eqref{eta}, the fact that $\hat f(y^k)$ is increasing by definition, and \eqref{eta} again. Therefore, provided $\kappa_1 \eps + 2\eta \le s/2$, we have $f(\tilde x_{k+1}) \ge s/2$, so that $\tilde x_{k+1} \in \cU_{s/2}$, and by definition  of $\alpha$, $\|\tilde y^{k+1}-\tilde x_{k+1}\| \le \alpha$. Therefore, 
\[\|\tilde y^{k+1} - y^{k+1}\| \le \|\tilde y^{k+1}-\tilde x_{k+1}\| + \|\tilde x_{k+1}-y^{k+1}\| \le \eps-\alpha + \alpha \le \eps,\] by the triangle inequality, so that, by construction of $y^{k+1}$, 
\[\hat f(y^{k+1}) \ge \hat f(\tilde y^{k+1}),\] 
and continuing, using \eqref{eta}, \eqref{lipschitz}, and \eqref{taylor}, we derive
\begin{align}
\hat f(y^{k+1})
\ge \hat f(\tilde y^{k+1}) 
&\ge f(\tilde y^{k+1}) - \eta \\
&\ge f(\tilde x_{k+1}) - \kappa_1 \alpha - \eta \\
&\ge f(y^k) + \nabla f(y^k)^\top (\tilde x_{k+1}-y^k) -\kappa_2 \|\tilde x_{k+1}-y^k\|^2 -  \kappa_1 \alpha -\eta \\
&= f(y^k) + \|\nabla f(y^k)\| (\eps - \alpha) -\kappa_2(\eps - \alpha)^2 -  \kappa_1 \alpha -\eta \\
&\ge  f(y^k) + \|\nabla f(y^k)\| \eps - \kappa_2 \eps^2 - 2\kappa_1 \alpha - \eta.
\end{align}
In the other direction, using \eqref{eta} and \eqref{taylor}, 
\[\hat f(y^{k+1}) \le f(y^{k+1}) + \eta \le f(y^k) + \nabla f(y^k)^\top(y^{k+1} - y^k) + \kappa_2 \eps^2 + \eta.\]
Combining these inequalities gives
\[
f(y^k) + \|\nabla f(y^k)\| \eps - \kappa_2 \eps^2 - 2\kappa_1 \alpha - \eta
\le f(y^k) + \nabla f(y^k)^\top(y^{k+1} - y^k) + \kappa_2 \eps^2 + \eta,\]
which after some simplifications yields
\begin{equation}
\label{inner_product_bound}
N(y^k)^\top(y^{k+1} - y^k) \ge \eps - \frac{2\big(\kappa_2 \eps^2 + \kappa_1 \alpha + \eta\big)}{\|\nabla f(y^k) \|}.
\end{equation}
We then obtain \eqref{grad} from \eqref{inner_product_bound} exactly as we obtained \eqref{grad_x} from \eqref{inner_product_bound_x} in the proof of \lemref{grad_x}, but this time with $u_k := y^{k+1} - y^k$ and $\xi_k := 2 (\kappa_2 \eps^2 + \kappa_1 \alpha + \eta)/\|\nabla f(y^k)\|$. 

We then derive \eqref{step_size} by applying the Cauchy--Schwarz inequality to the inner product on the left-hand side of  \eqref{inner_product_bound}.
\end{proof}

%%%%%%%%%%%%%
%\begin{proof}[Proof of \lemref{step_size}]
%The upper bound is by construction of the sequence. The lower bound follows from applying the  Cauchy--Schwarz inequality to the inner product in the left-hand side of  \eqref{inner_product_bound}.
%\end{proof}

%%%%%%%%%%%%
\begin{proof}[Proof of \lemref{local_convergence}]
Because $\cU_s$ is compact, it is enough to prove the result when $\eps$ and $(\alpha_{s/2}+\eta)/\eps$ are small enough. And since there are finitely many modes in $\cU_s$, it is enough to consider a single mode $x^* \in \cU_s$. For that mode, let $s^* := f(x^*)$, so that $s^* \ge s$, and let $\delta_1, \lambda_0, \lambda_1$ be as defined in \lemref{cluster_inclusion}. Then, by a Taylor development of $\nabla f$, 
\begin{align}
\label{local_convergence_proof1}
\|\nabla f(x)\| \ge \lambda_0  \|x- x^*\|, \quad \text{for any $x \in B(x^*, \delta_1)$.}  
\end{align}
Let $s_1 := s^* - \frac{1}{2}\lambda_0 \delta_1^2$ and fix any $s_2 \in (s_1, s^*)$, for example, $s_2 = \frac12(s_1+s^*)$. Provided $\eta < s_2 - s_1$, invoking \lemref{cluster_inclusion}, we have
$$
\cC_{s_2 - \eta}(x^*) \subset B(x^*, \delta_1).
$$
Similarly, by the same result, letting $\delta_2 := \sqrt{(s^* - s_2)/\lambda_1}$, provided $\eta < \frac12 (s^* - s_2)$, we also have
$$
B(x^*, \delta_2)  \subset \cC_{s_2 + \eta}(x^*).  
$$
Therefore, if $\eta < \min\big\{ s_2 - s_1, \frac12 (s^* - s_2)\big\}$, using \lemref{hatC}, 
\begin{equation}
\label{local_convergence_proof2}
B(x^*, \delta_2)  \subset \cC_{s_2 + \eta}(x^*)  \subset \hat{\cC}_{s_2}(x^*) \subset \cC_{s_2 - \eta}(x^*) \subset B(x^*, \delta_1).
\end{equation}
Since we are assuming that $\eta < s_2 - s_1$, it is the case that $s_2-\eta > s_1$, so that $\cC_{s_2 - \eta}(x^*) \subset \cC_{s_1}(x^*)$. Assume that $\eps$ is strictly smaller than the minimum separation between the connected components of $\cU_{s_1}$ and consider Graph Max Shift initialized at some $y^0 \in B\left(x^*, \delta_2 \right)$. Note that $y^0 \in \hat\cC_{s_2}(x^*)$,  and by the fact that $(y^k)$ is hill-climbing with respect to $\hat f$, it must be the case that $(y^k) \subset \hat\cU_{s_2}$. In fact, since by \lemref{hatC} each connected component of $\hat\cU_{s_2}$ must be within a connected component of $\cU_{s_2 - \eta}$, and since $s_2 - \eta > s_1$, each connected component of $\cU_{s_2 - \eta}$ must be within a connected component of $\cU_{s_1}$, and these are separated by a distance exceeding $\eps$, and since the increments in $(y^k)$ are of length bounded from above by $\eps$, it must be the case that $(y^k) \subset \hat{\cC}_{s_2}(x^*)$, that is, the sequence must remain in the same connected component of $\hat\cU_{s_2}$.
Continuing, since there are a finite number of points in $\cY$, the sequence $(y^k)$ must converge in a finite number of steps. Let $y^\infty$ denote the limit, which by the fact that $(y^k) \subset \hat{\cC}_{s_2}(x^*)$ and \eqref{local_convergence_proof2} above must be in $B(x^*, \delta_1)$. Therefore, due to \eqref{local_convergence_proof1}, 
\begin{align}
\label{grad_lower_bound}
\|\nabla f(y^\infty) \| \ge \lambda_0 \|y^\infty - x^* \|.
\end{align}

We now take $s_1$ large enough that $s_1 \ge s^*/2$, and we let $\alpha$ be short for $\alpha_{s/2}$.
Define $z = y^\infty + (\eps - \alpha)N(y^\infty)$, and let $y$ be the closest point in $\cY$ to $z$. 
By \eqref{lipschitz} and the fact that $y^\infty \in \cC_{s_2 - \eta}$,
\begin{align}
f(z) 
\ge f(y^\infty) - \kappa_1 \eps
\ge s_2 - \eta - \kappa_1 \eps, 
\end{align}
so that $f(z) \ge s_1$ if $\eps$ and $\eta$ are small enough that $\kappa_1 \eps + \eta \le s_2 - s_1$, which we assume henceforth. Because of that, $\|y - z\| \le \alpha$, implying via the triangle inequality that $\|y-y^\infty\| \le \eps$, making $y$ a fair search point when the sequence is at $y^\infty$, and for that reason, it must be the case that $\hat f(y^\infty) \ge \hat f(y)$.
It is enough to prove the result when $\eps > \alpha$, and assuming this is the case, we derive 
\begin{align}
0 \ge \hat f(y) - \hat f(y^\infty) 
&\ge f(y) - f(y^\infty) - 2\eta \\
&\ge f(z) - f(y^\infty) - \kappa_1\alpha - 2\eta \\
&\ge \nabla f(y^\infty)^\top(z - y^\infty) - \kappa_2\|z - y^\infty\|^2 - \kappa_1\alpha - 2\eta \\
&= \|\nabla f(y^\infty)\| (\eps - \alpha) - \kappa_2(\eps - \alpha)^2 - \kappa_1\alpha - 2\eta \\
&\ge \|\nabla f(y^\infty)\| \eps - \kappa_2 \eps^2 - 2\alpha \kappa_1 - 2\eta.
\end{align}
Combining this with \eqref{grad_lower_bound}, we conclude that
\[
\|y^\infty - x^*\| \le \frac{\kappa_2 \eps^2 + 2\alpha \kappa_1 + 2\eta}{\lambda_0 \eps},
\]
and from this we conclude.
\end{proof}

\subsubsection{Proof of {\em (i)}}
\label{sec:(i)}
Let $\rho$ be the quantity on the LHS of \eqref{weakly consistent}. Given $p > 0$, let $s, \Omega, A$ be as in \prpref{main}, and consider the quantity
\begin{align}
\rho_\Omega := \frac{\#\big\{i < j: \widehat\bI(i,j) < \bI(i,j);\  y_i, y_j \in \Omega\big\}}{\binom{n}{2}},
% \longrightarrow 0,\ \text{in probability as $n \to \infty$}; 
\end{align}
where $\widehat\bI$ is the output partition of Graph Max Shift without a merging post-processing step.
By definition, $\rho_\Omega \le \rho$, and with $\#\Omega := \#\{i: y_i \in \Omega\}$, we also have 
\begin{align}
\rho - \rho_\Omega 
&\le \#\big\{i < j: y_i \notin \Omega \text{ or } y_j \notin \Omega\big\} \binom{n}{2}^{-1} \\
&= 1 - \#\Omega (\#\Omega - 1)/n(n-1) \\
&\to 1 - (\textstyle\int_\Omega f)^2 \\
&\le 1 - (1-p)^2,
\end{align}
in probability as $n\to\infty$, since $\#\Omega \sim \Bin(n, \int_\Omega f)$ and $\int_\Omega f \ge 1-p$.
By \lemref{eta} and our assumption on $\eps$ in {\em (i)}, we have $\eta = o_P(\eps)$; and by \lemref{alpha}, letting $\alpha = \alpha_{s/2}$, we also have $\alpha = o_P(\eps)$. Therefore, $A (\eps + (\alpha+\eta)/\eps) = o_P(1)$. Let $\delta$ be the minimum separation between two modes within $\cU_{s/2}$ --- which is positive since $\cU_{s/2}$ is compact and the modes are isolated --- and let $\cE$ be the event that $A (\eps + (\alpha+\eta)/\eps) < \delta/2$. Then $\P(\cE) \to 1$ as $n\to\infty$, and under $\cE$, $\rho_\Omega = 0$ by \prpref{main}. Bringing all together, we have argued that $\limsup \rho \le 1 - (1-p)^2$ in probability as $n\to\infty$, and given that $p$ can be chosen arbitrarily close to zero, we conclude that $\rho \to 0$ in probability.

\subsubsection{Proof of {\em (ii)}}
The arguments are, at first, completely parallel with those in \secref{(i)}. We redefine the notation used there to accommodate the setting of {\em (ii)}. 
Let $\rho$ be the quantity on the LHS of \eqref{consistent}. Given $p > 0$, let $s, \Omega, A$ be as in \prpref{main}, and consider the quantity
\begin{align}
\rho_\Omega := \frac{\#\big\{i < j: \widehat\bI(i,j) \ne \bI(i,j);\  y_i, y_j \in \Omega\big\}}{\binom{n}{2}},
% \longrightarrow 0,\ \text{in probability as $n \to \infty$}; 
\end{align}
where $\widehat\bI$ is the output partition of Graph Max Shift with a merging post-processing step with parameter $\tau \ge 1$.
By definition, $\rho_\Omega \le \rho$, and as in \secref{(i)}, \begin{align}
\rho - \rho_\Omega 
%&\le \#\big\{i < j: y_i \notin \Omega \text{ or } y_j \notin \Omega\big\} \binom{n}{2}^{-1} \\
&\le 1 - \#\Omega (\#\Omega - 1)/n(n-1) \\
&\to 1 - (\textstyle\int_\Omega f)^2 \\
&\le 1 - (1-p)^2,
\end{align}
in probability as $n\to\infty$.
By \lemref{eta} and our assumption on $\eps$ in {\em (ii)}, we have $\eta = o_P(\eps^2)$; and by \lemref{alpha}, letting $\alpha = \alpha_{s/2}$, we also have $\alpha = o_P(\eps^2)$. 
Therefore, $A (\eps + (\alpha+\eta)/\eps) = A (\eps + o_P(\eps))$. 
(Compare with \secref{(i)}.) 
Let $\cE$ be the event that $A (\eps + (\alpha+\eta)/\eps) \le 2 A \eps$, so that $\P(\cE) \to 1$ as $n\to\infty$.
First, eventually, $2 A \eps < \delta/2$, where $\delta$ is as in \secref{(i)}, and when this is the case, under $\cE$, as in \secref{(i)},
\begin{align}
\#\big\{i < j: \widehat\bI(i,j) < \bI(i,j);\  y_i, y_j \in \Omega\big\} = 0.
\end{align}
Further, suppose that $\tau \ge 4A$. By \prpref{main}, points that belong to the same basin of attraction within $\Omega$ are moved to terminal nodes within distance $2A\eps$ of the corresponding mode, and therefore within $4A\eps$ of each other by the triangle inequality, so that they are grouped together in post-processing, implying that we also have 
\begin{align}
\#\big\{i < j: \widehat\bI(i,j) > \bI(i,j);\  y_i, y_j \in \Omega\big\} = 0.
\end{align}
We thus have $\rho_\Omega = 0$, and we may conclude as we did in \secref{(i)}.

\small
\bibliographystyle{chicago}
\bibliography{ref}

\appendix
\label{appendix:Appendix}

\section{Additional Plots}

We depict the weak clustering error \eqref{weakly consistent} and clustering error \eqref{consistent} as a function of $\eps$ for the other Gaussian mixture models considered in \figref{mixtures}. Observe that the quadrimodal density for exhibits higher variance than the bimodal density, for example. 

\begin{figure}[htbp]
    \centering
    \begin{subfigure}{\textwidth}
        \centering
        \includegraphics[width=0.8\textwidth]{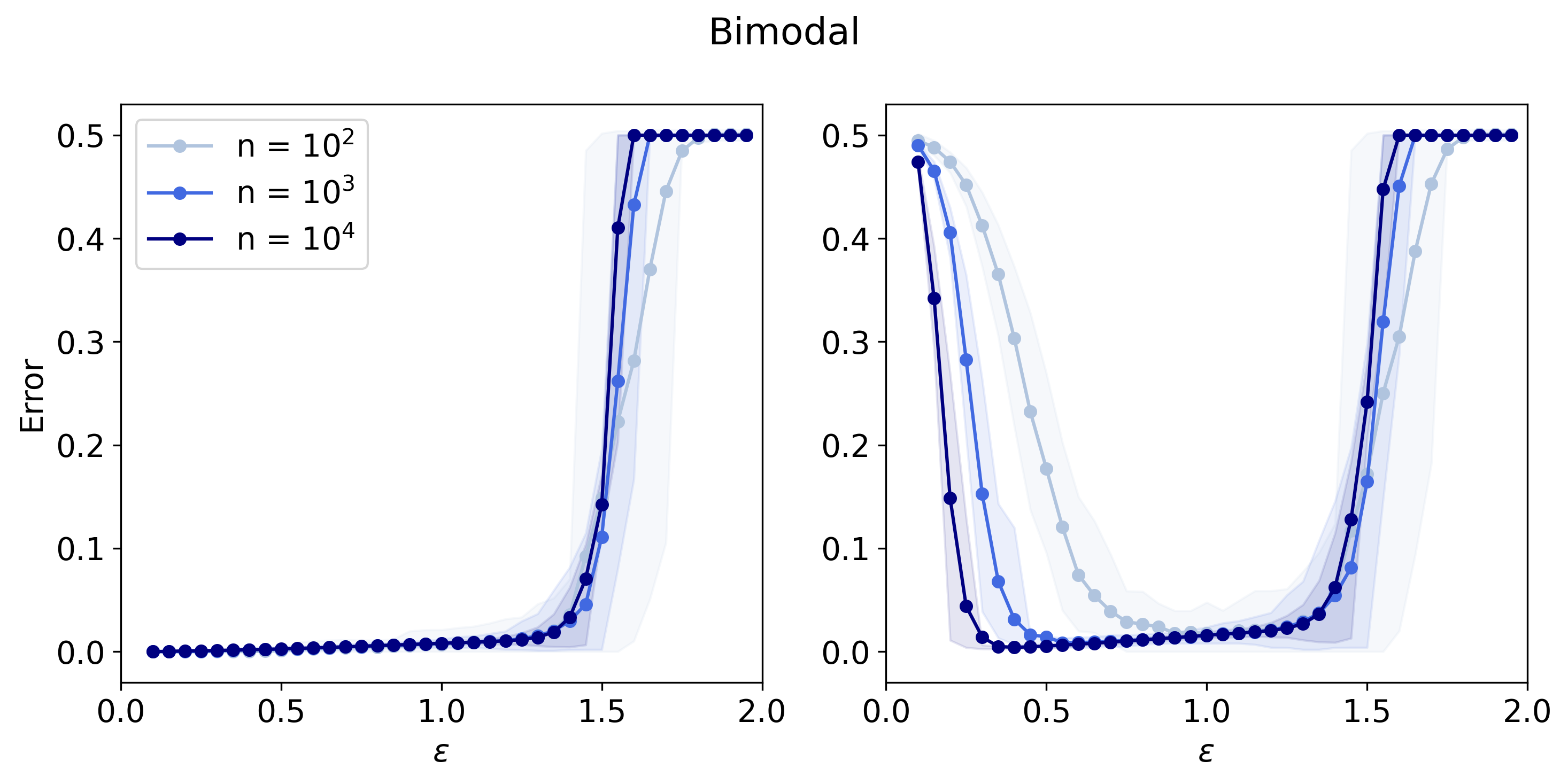}
    \end{subfigure}
    
    \begin{subfigure}{\textwidth}
        \centering
        \includegraphics[width=0.8\textwidth]{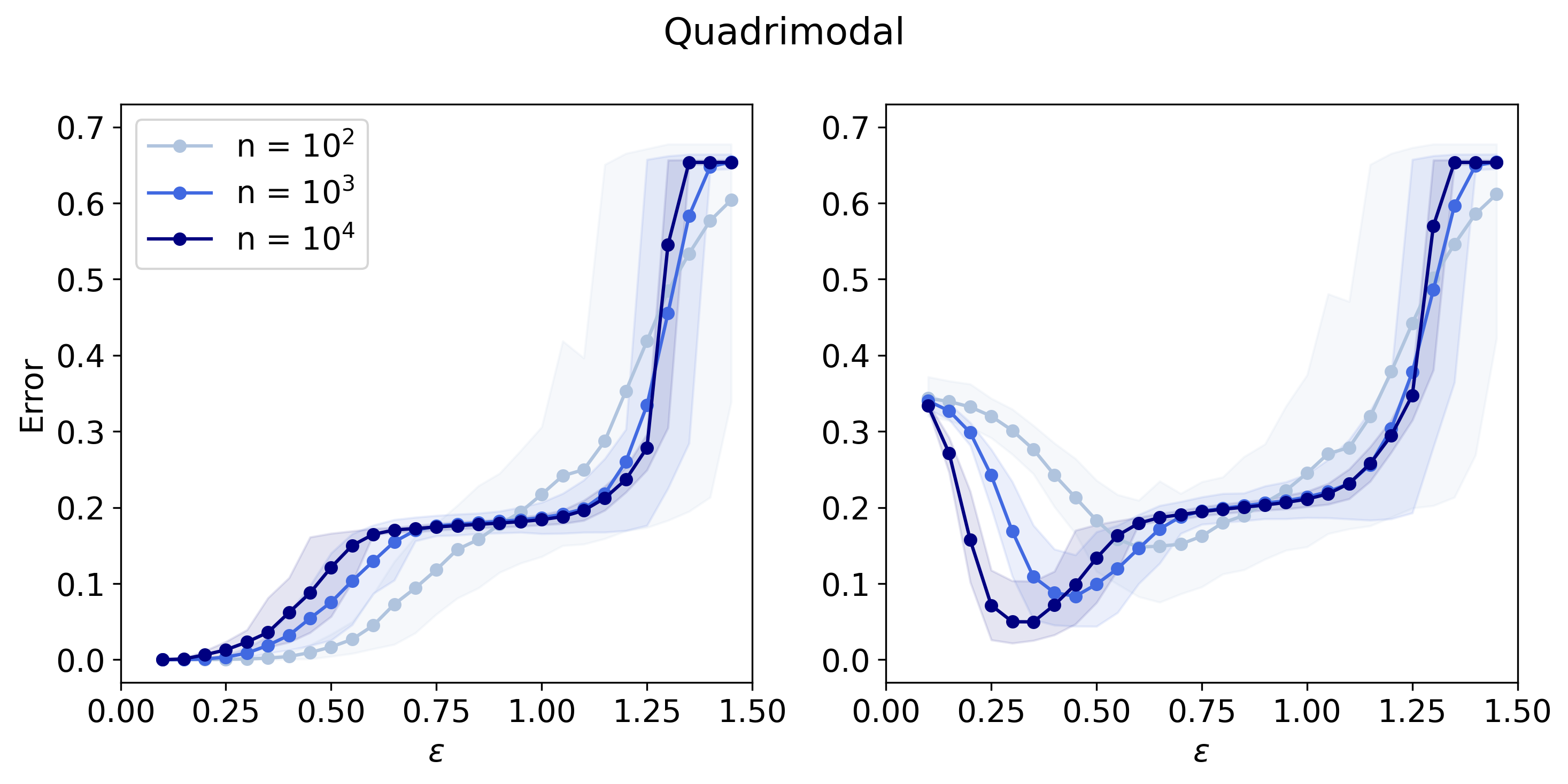}
    \end{subfigure}

    \begin{subfigure}{\textwidth}
        \centering
        \includegraphics[width=0.8\textwidth]{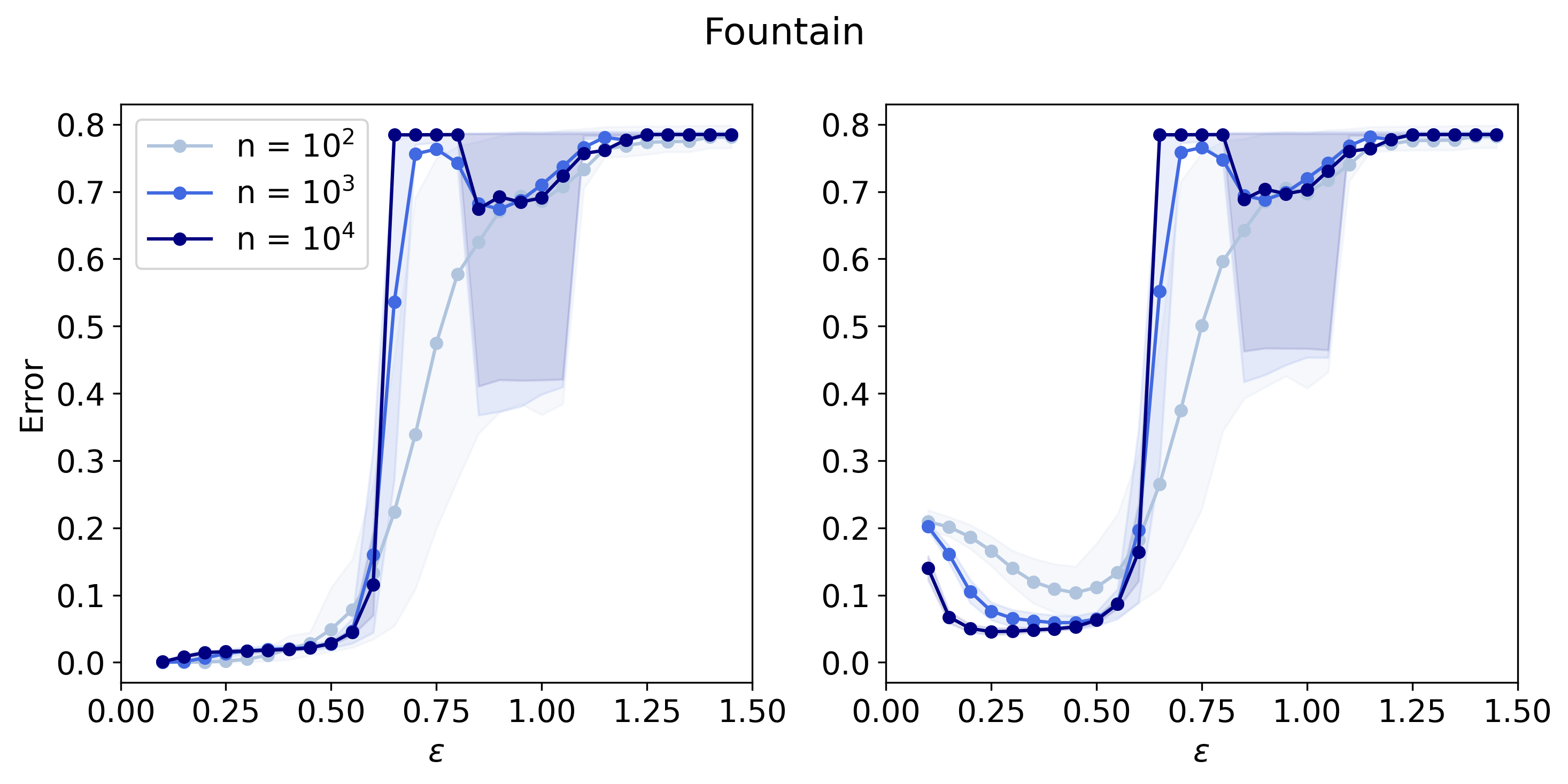}
    \end{subfigure}
    
    \caption{The weak clustering error (left) and the clustering error (right) of the other Gaussian mixtures from \figref{mixtures} when Graph Max Shift is applied to $\cG(\cY, \eps)$. The tuning parameter $\tau = 1$ for all experiments. Solid lines indicate the error averaged over $100$ simulations, and the shaded regions indicate empirical 80\% intervals.}
\end{figure}

We additionally include another bimodal Gaussian mixture. In this mixture, the separation between the two components is small so that the clustering task is more difficult. In \figref{gaussian_eps}, we  plot the clustering produced by Graph Max Shift with $\tau = 1$ over a range of values of $\eps$. Note that while the weak clustering error is small when $\eps = 0.035$ and $\eps = 0.06$, but the clusters are over-segmented in these cases. When $\eps$ is larger, both the weak clustering error and clustering error are large, as some points have crossed the basins of attraction. However, using the tuning parameter on the same sample, a small clustering error can be achieved. This is depicted in \figref{hop_tuning}, where $\tau$ is adjusted with $\eps$ fixed. When $\tau$ is too small the left cluster is split into multiple clusters, but by increasing $\tau = 2$, the nodes in the left component are merged together and most points are correctly clustered. When $\tau$ is too large, clusters in different basins of attraction are merged together.

\begin{figure}
    \centering
    \includegraphics[width=0.9\linewidth]{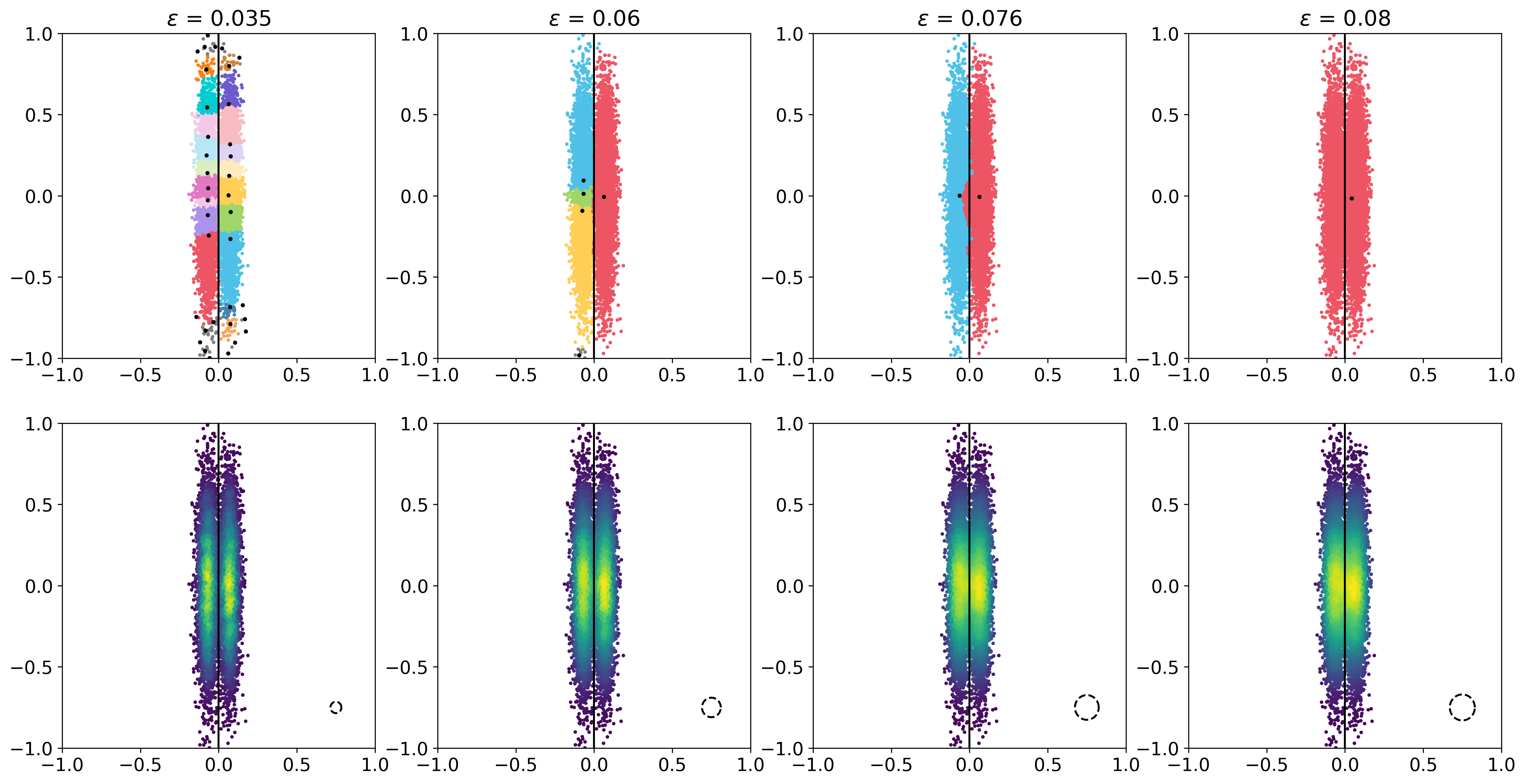}
    \caption{Graph Max Shift applied to $\cG(\cY;\eps)$ with data drawn from a bimodal Gaussian mixture. The top row shows the obtained clustering with the indicated $\eps$ and $\tau = 1$. The bottom row depicts the degree, which is proportional to the value of the density estimator implicitly computed. Additionally, each plot includes a ball of radius $\eps$ in the bottom right for reference.}
    \label{fig:gaussian_eps}
\end{figure}

\begin{figure}
    \centering
    \includegraphics[width=0.9\linewidth]{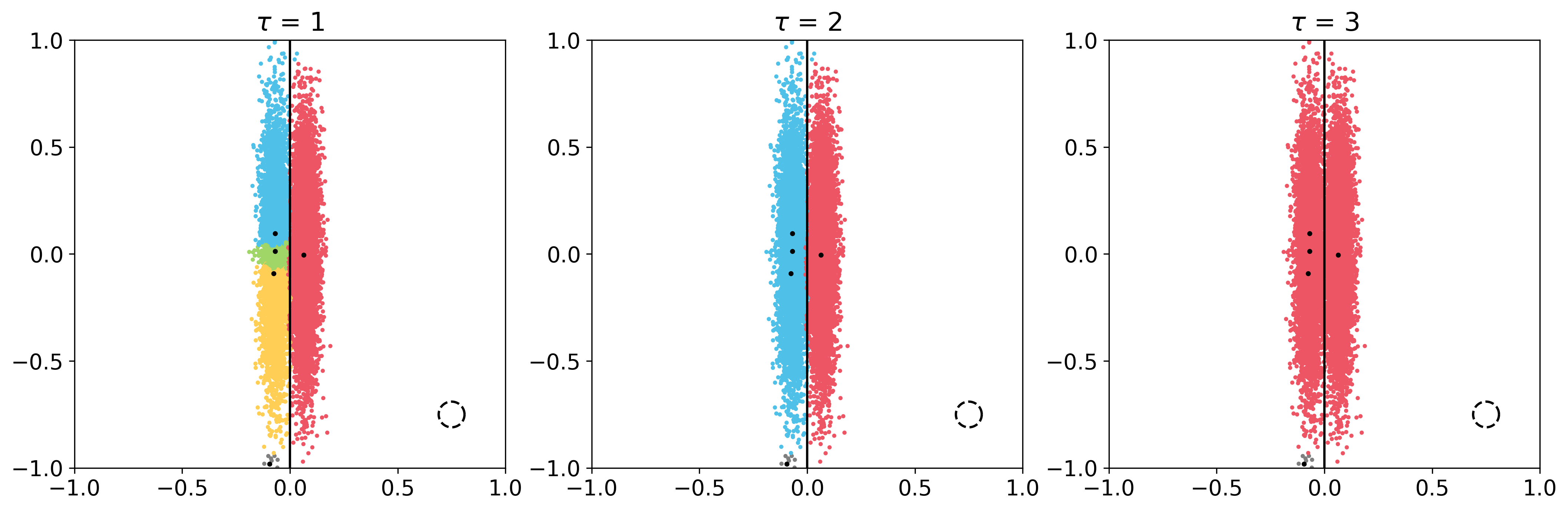}
    \caption{Graph Max Shift applied to $\cG(\cY;\eps)$ with data drawn from a bimodal Gaussian mixture. The obtained clustering with $\eps = 0.06$ fixed as $\tau$ is varied in $\{1, 2, 3\}$. The nodes associated with the clusters are plotted in black. Additionally, each plot includes a ball of radius $\eps$ in the bottom right for reference.}
    \label{fig:hop_tuning}
\end{figure}

%\subsection*{Acknowledgments} 
%This work was partially supported by the US National Science Foundation (DMS 1916071).

\end{document}